\newtheorem{definition}{Definition}
\newtheorem{theorem}{Theorem}
\newtheorem{remark}{Remark}
\newtheorem{lemma}{Lemma}
\newtheorem{proposition}{Proposition}
\title{Multi-Class Classification from Single-Class Data with Confidences}
\author{
Yuzhou Cao$^{1}$, Lei Feng$^{2}$, Senlin Shu$^{3}$, Yitian Xu$^{1}$, Bo An$^4$, Gang Niu$^5$ , Masashi Sugiyama$^{5,6}$\\
  $^1$College of Science, China Agricultural University, China\\
  $^2$College of Computer Science, Chongqing University, China\\
  $^3$College of Computer and Information Science, Southwest University, China\\
  $^4$School of Computer Science and Engineering, Nanyang Technological University, Singapore\\
  $^5$RIKEN Center for Advanced Intelligence Project, Tokyo, Japan\\
  $^6$The University of Tokyo, Tokyo, Japan\\
  \texttt{nanjing.caoyuzhou@gmail.com}, \texttt{lfeng@cqu.edu.cn},\texttt{ssl2108@email.swu.edu.cn}\\
  \texttt{xytshuxue@126.com}, \texttt{boan@ntu.edu.sg}\\
  \texttt{gang.niu@riken.jp}, \texttt{sugi@k.u-tokyo.ac.jp}  
}
\date{}
\begin{document}

\maketitle

\begin{abstract}
Can we learn a multi-class classifier from only \emph{data of a single class}?
We show that without any assumptions on the loss functions, models, and optimizers, we can successfully learn a multi-class classifier from only data of a single class with a rigorous consistency guarantee when \emph{confidences} (i.e., the class-posterior probabilities for all the classes) are available. 
Specifically, we propose an empirical risk minimization framework that is loss-/model-/optimizer-independent.
Instead of constructing a boundary between the given class and other classes, our method can conduct discriminative classification between all the classes even if no data from the other classes are provided.
We further theoretically and experimentally show that our method can be Bayes-consistent with a simple modification even if the provided confidences are highly noisy. Then, we provide an extension of our method for the case where data from a subset of all the classes are available. Experimental results demonstrate the effectiveness of our methods.
\end{abstract}
\section{Introduction}
In supervised learning, the annotations of a huge number of instances may not be easily obtained in many practical applications due to the concerns including but not limited to time consumption, expenditure, and privacy preserving. For these reasons, many \textit{weakly supervised learning} (WSL) frameworks \cite{Zhou2018A} have been studied in various scenarios recently, including \textit{semi-supervised learning} \cite{Chapelle, Xiaojin_Zhu, Gang_Niu, S4VM, PNU, Yufeng_Li, Lanzhe_Guo}, \textit{positive-unlabeled learning} \cite{Elkan, PUa, PUb, EPU}, \textit{unlabeled-unlabeled learning} \cite{UU, UUNN}, \textit{noisy-label learning} \cite{Noise_a, Masking, CT, LoIS}, \textit{complementary} and \textit{partial-label learning} \cite{Comp_a, BComp, Best_Comp, MCL, Limited_MCL, UGE, Partial_a, Consistent_Partial, Deterministic_Partial}, \textit{similarity-based learning} \cite{SU, SD, Sconf}, and \textit{positive-confidence learning} \cite{Pconf}.

In this paper, we investigate a novel weakly supervised learning scenario called \textbf{\textit{Single-Class Confidence (SC-Conf) classification}}, where only data of a single class annotated with the \textit{confidences} (i.e., the class-posterior probabilities for all the classes) are required for training a {\textit{discriminative}} multi-class classifier with \textit{convergence guarantee on estimation error}, without any data from other classes. Such a WSL framework can be widespread in many real-world scenarios. For example, in the scenario of climatic disaster forecasting, in order to conduct ordinary supervised classification, we deploy sensors to collect data of normal climate and different kinds of meteorological disasters. However, under extreme climate conditions, e.g., typhoons, tornados, and snowstorms, the sensors may not work properly and we can only collect data from the distribution of normal climate. To construct a climatic disaster forecasting system that can discriminate different kinds of climates with only data of normal climate, we can ask meteorologists to annotate the data with a confidence score and train the system using our proposed framework with only data of normal climate. Another example is about market investigation. The investigators of each company aim to predict the consumers' tendency of purchasing the products of their companies and other competitors. However, due to privacy concerns, there may be data isolation, i.e., each company can only have access to their customers' information. To accomplish the investigation with the isolated data, the investigators can gather the purchase history of their customers and transform the purchase amount of each company into confidence between 0 and 1 by pre-processing. With the isolated data and confidence scores, the predictor can be trained by SC-Conf classification.

To apply the SC-Conf classification framework in more realistic scenarios, we show that with a small modification, the result of our method with extremely noisy confidence can be consistent with that with accurate confidence, i.e., our framework is noise-robust. To justify our claims, both infinite-sample and finite-sample analysis are provided.

Furthermore, we extend our method to the case where we can collect data from a subset of all the classes with confidences. For example, when conducting multi-class classification of climatic disasters including sandstorm, snowstorm, drought, and flood, we can train the classifier using data only with a coarse label `wind storm' instead of collecting data from all four classes. Here we don't have to specify the true labels of each instance, e.g., the further division of data with coarse label `wind storm' into `sandstorm' and `snowstorm' is unnecessary. We refer to this framework as \textbf{\textit{Subset Confidence (Sub-Conf) classification}} in the rest part of this paper.

Our contributions in this paper are three-fold:
\begin{itemize}[leftmargin=0.4cm,topsep=-2pt]
\item We provide an unbiased estimator of ordinary classification risk with only data of a single class and their confidences. An optimizer-independent empirical risk minimization (ERM) framework with no assumptions on loss functions and models is proposed. We further establish the estimation error bound to show the consistency of the proposed method.
\item We show that if unlabeled samples are available, our proposed method is noise-robust and can be \textit{classifier-consistent} and can converge to the optimal classifier with high probability even with extremely noisy confidence. Furthermore, we give a novel finite-sample convergence analysis on the misclassification rate of this method. 
\item An extension of our ERM framework to the case where data from a subset of all the classes with confidences are provided.
\end{itemize}
Extensive experiments with deep neural networks on both benchmark and real-world datasets are conducted for demonstrating the usefulness of our proposed methods.
\section{Related Work}
In this section, we introduce related studies of the proposed SC-Conf classification framework. We briefly illustrate the proposed problem and related problems in Figure \ref{F1}.
\paragraph{Multi-Class Classification}
\begin{figure}
\centering
\includegraphics[height=3.6cm,width=16cm, trim={80 290 140 70},clip]{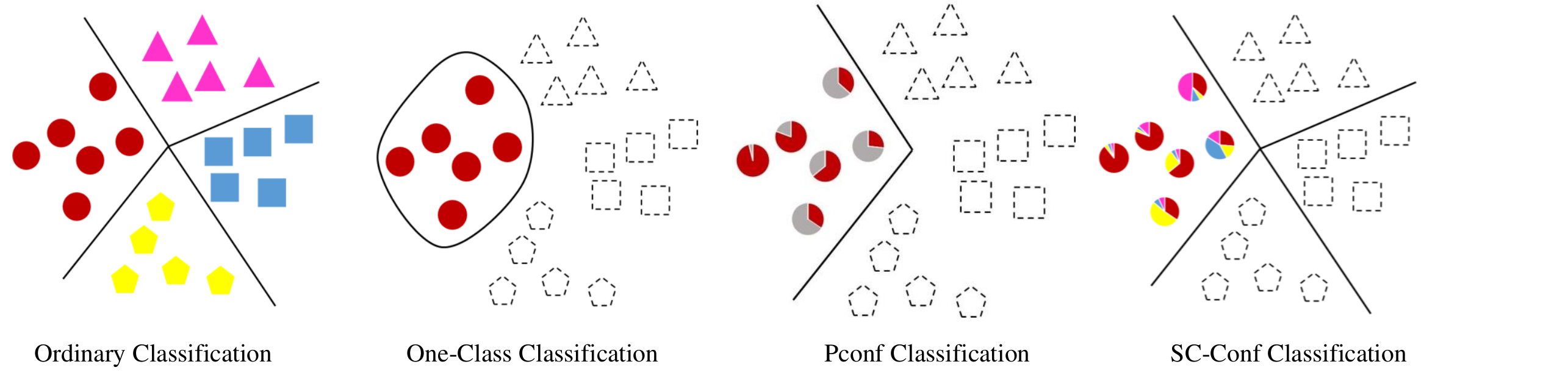}
\includegraphics[width=12cm, trim={2 360 190 100},clip]{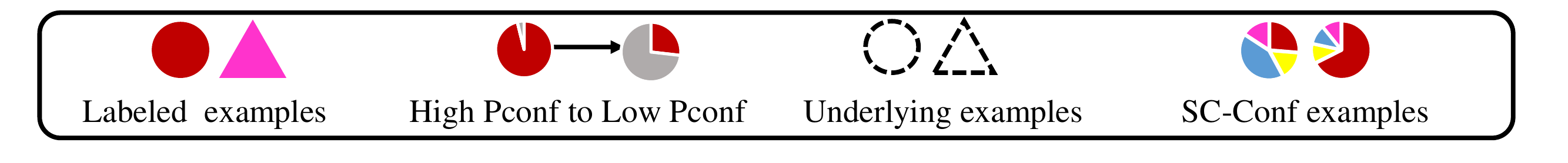}
\caption{Illustration of the proposed method and related works. The points with dotted lines are not necessary in the training process and are only shown for illustration. We further show in Section \ref{S4} that even we have no information about the value of confidence scores, we can still conduct consistent SC-Conf classification only with the class with the maximum class-posterior probability: $y'={\rm argmax}_{y\in\mathcal{Y}_{s}}p(y|\bm{x})$. We also show in Section \ref{S5} that it is possible to learn from a subset of all the classes.}
\label{F1}
\vspace{-7pt}
\end{figure}
\vspace{-7pt}
For ordinary $K$-class multi-class classification problem, $\mathcal{X}\subset\mathbb{R}^{d}$ is the feature space and $\mathcal{Y}=[K]$ is the label space. Suppose each example $(\bm{x},y)\in\mathcal{X}\times\mathcal{Y}$ is drawn independently from an unknown distribution with density $p(\bm{x},y)$. To train a multi-class classifier $\bm{g}:\mathcal{X}\rightarrow\mathbb{R}^{K}$, we have to minimize the classification risk below:
\begin{align}
\label{OR}
\textstyle R(\bm{g})=\mathbb{E}_{p(\bm{x},y)}\left[\ell(\bm{g}(\bm{x}),y)\right],
\end{align}
where $\ell(\cdot,\cdot):\mathbb{R}^{K}\times\mathcal{Y}\rightarrow\mathbb{R}_{+}$ is the multi-class loss function and $\mathbb{E}_{p(\bm{x},y)}[\cdot]$ is the expectation on distribution with density $p(\bm{x},y)$.  The predicted label of $\bm{x}$ is given as
$f(\bm{x})=\mathop{\rm argmax}\nolimits_{y\in\mathcal{Y}}g_{y}(\bm{x})$,
where $g_{y}(\bm{x})$ is the $y$-th element of $\bm{g}(\bm{x})$. Since we are not aware of the joint density $p(\bm{x},y)$, we collect identically and independently distributed examples $\{(\bm{x}_{i},y_{i})\}_{i=1}^{n}$ and minimize the empirical risk (the sample mean of the losses of the collected examples) instead.
\vspace{-7pt}
\paragraph{Multi-Positive and Unlabeled Learning.}
Multi-positive and unlabeled learning \cite{MPU} is a WSL framework that can train multi-class classifiers using labeled data from $K-1$ classes, unlabeled data collected from the distribution with density $p(\bm{x})$, and the class-prior probability of the unseen class. However, it is quite hard to cope with these requirements in real-world scenarios. On the other hand, our proposed SC-Conf classification framework only needs data of a single class and their confidence while it can still train multi-class classifiers.
\vspace{-7pt}
\paragraph{One-Class Classification.} One-class classification \cite{OC1, SVDD, OCSVM, DeepOC} aims at `describing' a given class with data of a single class rather than conducting discriminative classification. Furthermore, it cannot construct classification boundaries between all of the $K$ classes with a finite-sample convergence guarantee since it has no access to the information of the rest \textit{K-1} classes. Compared with one-class classification, our proposed SC-Conf classification framework can construct discriminative multi-class classifiers with only data of a single class by utilizing the confidence scores. Due to the discussion above, it can be seen that the one-class classification methods are not suitable in the problem of multi-class classification.
\vspace{-7pt}
\paragraph{Positive-Confidence Classification.} In \cite{Pconf, Pconf2}, Pconf classification is proposed to train discriminative binary classifiers only from positive examples and their positive-confidence (class-posterior probabilities of positive class). It can be regarded as a special case of our proposed SC-Conf classification when the number of classes is $K=2$ since the class-posterior probability of the negative class can be immediately obtained when the Pconf is given. Nevertheless, it is still confined in the binary classification setting. When applied to multi-class classification, the Pconf framework can only discriminate the class regarded as the `positive' class from the rest of all the classes, while the SC-Conf can conduct discriminative classification between all the classes, as shown in Figure \ref{F1}.
\section{SC-Conf Classification}
\label{S3}
In this section, we formulate the SC-Conf classification problem. In ordinary multi-class classification problem, we collect \textit{i.i.d.} labeled data $\{(\bm{x}_{i},y_{i})\}_{i=1}^{n}$ from the joint distribution $p(\bm{x},y)$ and conduct ERM \cite{SLT} by minimizing the unbiased risk estimator of classification risk (\ref{OR}) constructed by the labeled data. However, under the SC-Conf classification framework, we are only given data from a \textbf{single} class $y_{\mathrm{s}}$ with confidence: $\mathcal{S}=\{\bm{x}_{i},\bm{r}_{i}\}_{i=1}^{n}$, where $\bm{x}_{i}$ is an instance drawn independently from the distribution of a single class with density $p(\bm{x}|y=y_{\mathrm{s}})$ and $\bm{r}_{i}$ is a confidence vector given by $\{p(y|\bm{x}_{i})\}_{y=1}^{K}$\footnote{Note that in real-world applications, the equation $\bm{r}_{i}=p(y|\bm{x}_{i})$ may not hold due to the noisy supervision. We analyze the effect of noisy confidence in Section \ref{S4}.}. Based on these information, we show how to construct an unbiased risk estimator and conduct ERM without any instance from the other $K-1$ classes in this section.
\subsection{Unbiased Risk Estimator}
We denote the class-prior probability of class $y_{\mathrm{s}}$ as $\pi_{y_s}$. Let $\bm{r}(\bm{x})=\{p(y|\bm{x})\}_{y=1}^{K}$ and $r^{i}(\bm{x})=p(y=i|\bm{x})$. $\mathbb{E}_{p(\bm{x}|y_{\mathrm{s}})}[\cdot]$ is the expectation on the distribution with density $p(\bm{x}|y_{\mathrm{s}})$. The following theorem shows that the classification risk can be recovered with the supervision above:
\begin{theorem}
\label{URE-SC}
The multi-class classification risk can be recovered from data drawn from a single class if their confidence scores are given:
\begin{align}
\label{SC}
\textstyle
R(\bm{g})=\pi_{y_{\mathrm{s}}}\mathbb{E}_{p(\bm{x}|y_{\mathrm{s}})}\left[\sum_{y=1}^{K}\frac{r^{y}(\bm{x})}{r^{y_{\mathrm{s}}}(\bm{x})}\ell(\bm{g}(\bm{x}),y)\right],
\end{align}
where $p(y_{\mathrm{s}}|\bm{x})>0$ for all $\bm{x}$ in the support of the distribution with density $p(\bm{x})$.
\end{theorem}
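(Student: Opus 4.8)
The plan is to recover the single-class form of the risk by a change of measure (importance weighting) that replaces the inaccessible feature marginal $p(\bm{x})$ with the available single-class conditional $p(\bm{x}|y_{\mathrm{s}})$. First I would expand the ordinary risk (\ref{OR}) by writing the expectation over the joint density as a sum over labels and an integral over features, and then factor the joint density through the class-posterior:
\[
R(\bm{g}) = \sum_{y=1}^{K}\int p(\bm{x},y)\,\ell(\bm{g}(\bm{x}),y)\,\mathrm{d}\bm{x} = \sum_{y=1}^{K}\int r^{y}(\bm{x})\,p(\bm{x})\,\ell(\bm{g}(\bm{x}),y)\,\mathrm{d}\bm{x},
\]
using $p(\bm{x},y)=p(y|\bm{x})p(\bm{x})=r^{y}(\bm{x})p(\bm{x})$ so that every term carries the common factor $p(\bm{x})$.

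The key step is to eliminate $p(\bm{x})$ in favor of $p(\bm{x}|y_{\mathrm{s}})$. Applying Bayes' rule to the single class yields $p(\bm{x}|y_{\mathrm{s}})=r^{y_{\mathrm{s}}}(\bm{x})p(\bm{x})/\pi_{y_{\mathrm{s}}}$, equivalently $p(\bm{x})=\pi_{y_{\mathrm{s}}}\,p(\bm{x}|y_{\mathrm{s}})/r^{y_{\mathrm{s}}}(\bm{x})$. Substituting this identity into each term, pulling the finite label sum back inside the integral, and identifying the resulting integral against $p(\bm{x}|y_{\mathrm{s}})$ as an expectation gives exactly the right-hand side of (\ref{SC}).

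The only delicate point — and the place I would be most careful — is the well-definedness of the importance weight $r^{y}(\bm{x})/r^{y_{\mathrm{s}}}(\bm{x})$, which requires $r^{y_{\mathrm{s}}}(\bm{x})=p(y_{\mathrm{s}}|\bm{x})>0$; this is precisely the stated hypothesis. I would observe that wherever $p(\bm{x})>0$ we also have $p(\bm{x}|y_{\mathrm{s}})>0$ under this assumption, so the supports of the two densities coincide and the change of measure introduces no singularity: the region where $p(\bm{x})>0$ but $r^{y_{\mathrm{s}}}(\bm{x})=0$, which would make the weight blow up and prevent recovery, is excluded by hypothesis. The remaining manipulations — interchanging the finite sum with the integral and rewriting the integral as $\pi_{y_{\mathrm{s}}}\mathbb{E}_{p(\bm{x}|y_{\mathrm{s}})}[\cdot]$ — are routine, since the sum over $K$ classes is finite and the loss is nonnegative.
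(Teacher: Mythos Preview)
Your proposal is correct and follows essentially the same route as the paper: both arguments hinge on the Bayes-rule identity $p(\bm{x})=\pi_{y_{\mathrm{s}}}\,p(\bm{x}|y_{\mathrm{s}})/r^{y_{\mathrm{s}}}(\bm{x})$ to swap the marginal $p(\bm{x})$ for the single-class conditional, the only cosmetic difference being that the paper starts from the right-hand side of (\ref{SC}) and simplifies to $R(\bm{g})$ whereas you start from $R(\bm{g})$ and expand. Your explicit discussion of why the hypothesis $p(y_{\mathrm{s}}|\bm{x})>0$ guarantees the importance weight is well defined and the supports coincide is a welcome addition that the paper's proof leaves implicit.
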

The proof is shown in Appendix A. A sketch of the proof is that (\ref{SC}) is essentially duplicating $(\bm{x},\bm{r})$ into $\left(\bm{x},p(y=1|\bm{x})\right), \left(\bm{x},p(y=2|\bm{x})\right),...,\left(\bm{x},p(y=K|\bm{x})\right)$ and then applying importance weighting to adjust the difference between the distribution of a single class and all the classes. According to Theorem \ref{URE-SC}, we can see that Eq.~(\ref{SC}) is an equivalent risk expression of Eq.~(\ref{OR}). Then we can directly get the unbiased risk estimator of Eq.~(\ref{SC}):
\begin{align}
\textstyle
\label{USC}
\hat{R}_{\mathrm{SC}}(\bm{g})=\frac{\pi_{y_{\mathrm{s}}}}{n}\sum\nolimits_{i=1}^{n}\sum_{y=1}^{K}\frac{r^{y}_{i}}{r^{y_{\mathrm{s}}}_{i}}\ell\left(\bm{g}(\bm{x_{i}}),y\right).
\end{align}
Since Eq.~(\ref{SC}) does not include the expectation over the data of the other $K-1$ classes, only data from the single class $y_{\mathrm{s}}$ are used in Eq.~(\ref{USC}) for constructing the unbiased risk estimator. Then the following work is to conduct ERM by minimizing the unbiased risk estimator in Eq.~(\ref{USC}). Since the class-prior probability $\pi_{y_{\mathrm{s}}}$ can be safely ignored without changing the result of ERM, we do not have to estimate $\pi_{s}$ in the process of SC-Conf classification.
\subsection{Estimation Error Bound}
\label{sbound}
Here we analyze the consistency of the proposed unbiased risk estimator $\hat{R}_{SC}(\bm{g})$ in (\ref{USC}) and give the convergence rate of its estimation error. First of all, let $\bm{\mathcal{G}}=[\mathcal{G}_{y}]_{y=1}^{K}$ be a class of $K$-valued functions that we consider in ERM. Assume that there is $C_{\bm{g}}>0$ that ${\rm sup}_{\bm{g}\in \bm{\mathcal{G}}}\|\bm{g}\|_{\infty}\leq C_{\bm{g}}$. The loss function $\ell(\bm{g}(\bm{x}),y)$ is Lipschitz continuous for all $\|\bm{g}\|_{\infty}\leq C_{\bm{g}}$ with Lipschitz constant $L_{\ell}>0$ and upper-bounded by $C_{\ell}>0$. At the end, we assume that the confidence of the given single class is not too small that there exists $C_{r}>0$ such that $p(y_{\mathrm{s}}|\bm{x})>C_{r}$ holds almost surely. Though we can omit the $p(y_{\mathrm{s}}|\bm{x})$ and analyze the biased version of $\hat{R}_{SC}(\bm{g})$, for simplicity, only the unbiased version is analyzed here.

Denote by $\hat{\bm{g}}_{SC}$ the minimizer of $\hat{R}_{SC}(\bm{g})$ in (\ref{USC}) and $\bm{g}^{*}$ the minimizer of (\ref{OR}). Then the estimation error is given by $R(\hat{\bm{g}})-R(\bm{g}^{*})$. The following theorem gives the estimation error bound:
\begin{theorem}\label{Bound} For any $0<\delta<1$, with probability at least 1-$\delta$:
\begin{align}
\textstyle
R(\hat{\bm{g}}_{SC})-R(\bm{g}^{*})\leq\frac{4\sqrt{2}\pi_{y_{\mathrm{s}}}L_{\ell}}{C_{r}}\sum_{y=1}^{K}\mathfrak{R}_{n}(\mathcal{G}_{y})+2\pi_{y_{\mathrm{s}}}C_{\ell}\sqrt{\frac{\log\frac{2}{\delta}}{2n}},
\end{align}
where $\mathfrak{R}_{n}(\mathcal{G}_{y})$ is the Rademacher complexity \cite{rade} of $\mathcal{G}_{y}$ on \textit{i.i.d.} samples with size $n$ drawn from a single class $p(\bm{x}|y_{\mathrm{s}})$.
\end{theorem}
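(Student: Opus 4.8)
The plan is to follow the standard uniform-deviation route for estimation-error bounds, crucially using that Theorem~\ref{URE-SC} makes $\hat{R}_{SC}$ an \emph{unbiased} estimator of $R$, so that $\mathbb{E}[\hat{R}_{SC}(\bm{g})]=R(\bm{g})$ for every fixed $\bm{g}$. First I would decompose the excess risk: writing $R(\hat{\bm{g}}_{SC})-R(\bm{g}^{*})$ as the sum of $R(\hat{\bm{g}}_{SC})-\hat{R}_{SC}(\hat{\bm{g}}_{SC})$, $\hat{R}_{SC}(\hat{\bm{g}}_{SC})-\hat{R}_{SC}(\bm{g}^{*})$, and $\hat{R}_{SC}(\bm{g}^{*})-R(\bm{g}^{*})$, the middle term is nonpositive since $\hat{\bm{g}}_{SC}$ minimizes $\hat{R}_{SC}$, and each of the remaining terms is at most $\sup_{\bm{g}\in\bm{\mathcal{G}}}|\hat{R}_{SC}(\bm{g})-R(\bm{g})|$. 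Hence it suffices to control $2\sup_{\bm{g}\in\bm{\mathcal{G}}}|\hat{R}_{SC}(\bm{g})-R(\bm{g})|$ with high probability, which I would split into its expectation plus a concentration term.

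For the concentration step I would apply McDiarmid's bounded-difference inequality to $\Phi(\bm{x}_1,\dots,\bm{x}_n)=\sup_{\bm{g}}|\hat{R}_{SC}(\bm{g})-R(\bm{g})|$. The key estimate is that each per-sample term obeys $\sum_{y=1}^{K}\frac{r_i^{y}}{r_i^{y_{\mathrm{s}}}}\ell(\bm{g}(\bm{x}_i),y)\le\frac{C_\ell}{r_i^{y_{\mathrm{s}}}}\sum_{y=1}^{K}r_i^{y}=\frac{C_\ell}{r_i^{y_{\mathrm{s}}}}\le\frac{C_\ell}{C_r}$, where I use $\sum_{y}r_i^{y}=1$, $r_i^{y_{\mathrm{s}}}\ge C_r$, and $\ell\le C_\ell$. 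Since $R$ is nonrandom, replacing one sample changes $\Phi$ by at most $\frac{\pi_{y_{\mathrm{s}}}C_\ell}{nC_r}$, and McDiarmid then yields the second, $O(1/\sqrt{n})$ term of the form $\pi_{y_{\mathrm{s}}}C_\ell\sqrt{\log(2/\delta)/(2n)}$ (up to the precise constant obtained from bounding the weighted summand) with probability at least $1-\delta$.

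The heart of the proof is bounding $\mathbb{E}[\sup_{\bm{g}}|\hat{R}_{SC}(\bm{g})-R(\bm{g})|]$ by Rademacher complexities. I would apply symmetrization to introduce Rademacher variables $\sigma_i$ (contributing a factor $2$) and reduce to $\mathbb{E}\sup_{\bm{g}}\frac1n\sum_i\sigma_i\sum_{y}\frac{r^{y}(\bm{x}_i)}{r^{y_{\mathrm{s}}}(\bm{x}_i)}\ell(\bm{g}(\bm{x}_i),y)$. I expect the main obstacle to be here: the summand couples all $K$ coordinates of $\bm{g}(\bm{x}_i)$ through $\ell$ and carries the data-dependent importance weights $r^{y}/r^{y_{\mathrm{s}}}$, whereas the target bound is stated in terms of the per-coordinate complexities $\mathfrak{R}_n(\mathcal{G}_y)$. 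I would resolve this by observing that, conditionally on the data, the map $\bm{z}\mapsto\sum_{y}\frac{r^{y}(\bm{x}_i)}{r^{y_{\mathrm{s}}}(\bm{x}_i)}\ell(\bm{z},y)$ is Lipschitz with constant at most $\frac{L_\ell}{r^{y_{\mathrm{s}}}(\bm{x}_i)}\le\frac{L_\ell}{C_r}$, since the weights sum to $1/r^{y_{\mathrm{s}}}(\bm{x}_i)$ and each $\ell(\cdot,y)$ is $L_\ell$-Lipschitz. Applying a vector-valued contraction inequality (Maurer's lemma, the source of the $\sqrt{2}$ factor) then peels off this Lipschitz map and decouples the coordinates, leaving $\sum_{y=1}^{K}\mathfrak{R}_n(\mathcal{G}_y)$ scaled by $\frac{L_\ell}{C_r}$.

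Collecting the constants reproduces the stated bound: the factor $2$ from the excess-risk decomposition, the factor $2$ from symmetrization, and the factor $\sqrt{2}$ from vector contraction combine to the $4\sqrt{2}$ in front of $\frac{\pi_{y_{\mathrm{s}}}L_\ell}{C_r}\sum_{y}\mathfrak{R}_n(\mathcal{G}_y)$, while McDiarmid supplies the $\sqrt{\log(2/\delta)/(2n)}$ term and the two-sided control of the supremum accounts for the $2/\delta$ inside the logarithm. The delicate point I would watch is the contraction: one must keep the importance weights inside a single, uniformly $\frac{L_\ell}{C_r}$-Lipschitz map so that Maurer's inequality applies cleanly and returns exactly the per-class Rademacher complexities $\mathfrak{R}_n(\mathcal{G}_y)$ rather than the complexity of the composite, weighted-loss class.
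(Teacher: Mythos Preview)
Your proposal is correct and mirrors the paper's proof essentially step for step: the ERM decomposition to $2\sup_{\bm{g}\in\bm{\mathcal{G}}}|\hat R_{SC}(\bm{g})-R(\bm{g})|$, McDiarmid with bounded differences $\pi_{y_{\mathrm{s}}}C_\ell/(nC_r)$, symmetrization, and Maurer's vector-contraction inequality applied to the $L_\ell/C_r$-Lipschitz composite map $\bm{z}\mapsto\sum_{y}\frac{r^{y}}{r^{y_{\mathrm{s}}}}\ell(\bm{z},y)$. One minor bookkeeping note: your own bounded-difference estimate correctly carries a $1/C_r$, so McDiarmid actually gives $\frac{\pi_{y_{\mathrm{s}}}C_\ell}{C_r}\sqrt{\log(2/\delta)/(2n)}$ (as in the paper's intermediate lemma), and its absence in the displayed theorem statement is a typo in the paper rather than a gap in your argument.
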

The definition of Rademacher complexity and the proof is given in the Appendix B. Generally, the Rademacher complexity $\mathfrak{R}_{n}(\mathcal{G}_{y})$ has an upper bound of $C_{\mathcal{G}_{y}}/\sqrt{n}$ \cite{RadeNN}, where $C_{\mathcal{G}_{y}}$ is a positive number determined by the function class $\mathcal{G}_{y}$. We can see that the risk of empirically optimal classifier $R(\hat{\bm{g}}_{SC})$ converges to that of the optimal classifier $R(\bm{g}^{*})$ in the rate of $O_{p}(1/\sqrt{n})$, which is the optimal parametric rate in probability and cannot be improved without additional assumptions \cite{Opt}.
\section{Consistent Noise-Robust Method for SC-Conf Classification}
\label{S4}
In Section \ref{S3}, it can be seen that the confidences play an important role in the construction of the unbiased risk estimator (\ref{USC}). However, in practical applications, the confidences $\{\bm{r}_{i}\}_{i=1}^{n}$ are often inaccurate, and we can only get the noisy confidence scores $\{\tilde{\bm{r}}_{i}\}_{i=1}^{n}$ generated by a corrupted distribution. Since $\tilde{\bm{r}}_{i}\not=\bm{r}_{i}$, the risk estimator proposed in Section \ref{S3} is biased in general if we simply plug the noisy confidence into (\ref{USC}). As a result, the learning guarantee in the previous section no longer holds and thus the use of ERM is not reasonable.

Can we obtain a provably consistent multi-class classifier with data of a single class and extremely noisy confidence scores? Surprisingly, in this section, we give a positive answer to this question. 

We propose a refined method called \textbf{Noise-Robust Single-Class Confidence} (NoRSC-Conf) classification, where only data of a single class, unlabeled data, and noisy confidence scores are provided. Then we show that this method is \textit{classifier-consistent} \cite{BComp, Consistent_Partial, Deterministic_Partial}: the predictions of the classifier generated from noisy supervision is still infinite-sample consistent to those of the optimal classifier of classification risk $R(\bm{g})$. Based on this result, we establish the ERM framework for NoRSC-Conf classification. Since there is a gap between the convergence on the corrupted distribution and the original one, we cannot directly analyze the finite-sample consistency of NoRSC-Conf classification with traditional technology on bounding the uniform convergence. To justify the use of ERM, we exploit the connection between the noisy and original distribution and give a novel finite-sample analysis on the misclassification rate of NoRSC-Conf classification. Finally, we show that the weight $\phi(\bm{x})$ used in the construction of NoRSC-Conf classification can be estimated from unlabeled data and data of a single class by applying the density ratio matching-based method \cite{Density} and give its estimation error bound.
\subsection{Formulation of NoRSC-Conf Classification and Infinite-Sample Consistency}
In the setting of NoRSC-Conf classification, we are provided with \textit{i.i.d.} unlabeled data $\mathcal{S}_{u}=\{\bm{x}^{u}_{i}\}_{i=1}^{n_{u}}$ drawn from the distribution with marginal density $p(\bm{x})$, data of a single class with noisy confidence scores $\widetilde{\mathcal{S}}=\{\bm{x}_{i},\tilde{r}_{i}\}_{i=1}^{n}$. Here the generation of data of a single class $\{\bm{x}_{i}\}_{i=1}^{n}$ is the same with that in the previous section, and the noisy confidence $\{\tilde{\bm{r}}\}_{i=1}^{n}$ are generated by a corrupted distribution with density $\tilde{p}(y|\bm{x})$. 

Suppose the optimal classifier $\tilde{\bm{g}}^{\mathrm{bayes}}=\mathop{\rm argmin}_{\bm{g}~{\rm measurable}}\tilde{R}(\bm{g})$ is in the function class $\bm{\mathcal{G}}$, which is a widely used assumption in classifier-consistency analysis, e.g., the Assumption 1 in \cite{BComp}. Straightforwardly, we give the the formulation and show the infinite-sample consistency of the NoRSC-Conf classification method:



\begin{theorem}
\label{T2}
For each instance $\bm{x}$, if $\mathop{\rm argmax}_{y\in\mathcal{Y}}\tilde{p}(y|\bm{x})=\mathop{\rm argmax}_{y\in\mathcal{Y}}p(y|\bm{x})$ and the classification-calibrated losses (e.g., softmax cross entropy loss, mean squared error) \cite{MCC} are used, the following reweighed risk formulation is classifier-consistent:
\begin{align}
\textstyle
\label{NSC}
\tilde{R}(\bm{g})=\mathbb{E}_{p(\bm{x}|y_{\mathrm{s}})}\left[\phi(\bm{x})\sum_{y=1}^{K}\tilde{r}^{y}(\bm{x})\ell(\bm{g}(\bm{x}),y)\right],
\end{align}
where $\phi(\bm{x})=\frac{p(\bm{x})}{p(\bm{x}|y_{\mathrm{s}})}$,  $\tilde{r}^{y}(\bm{x})=\tilde{p}(y|\bm{x})$, and $p(y_{\mathrm{s}}|\bm{x})>0$ for all $\bm{x}$ in the support of the distribution with density $p(\bm{x})$.
\end{theorem}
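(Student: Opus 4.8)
The plan is to reduce $\tilde{R}(\bm{g})$ to an ordinary (but relabeled) classification risk over the full marginal $p(\bm{x})$, and then invoke the defining property of classification-calibrated losses pointwise. First I would substitute $\phi(\bm{x})=p(\bm{x})/p(\bm{x}|y_{\mathrm{s}})$ and apply the change of measure
\begin{align*}
\tilde{R}(\bm{g})=\int \frac{p(\bm{x})}{p(\bm{x}|y_{\mathrm{s}})}\Big(\sum_{y=1}^{K}\tilde{p}(y|\bm{x})\ell(\bm{g}(\bm{x}),y)\Big)p(\bm{x}|y_{\mathrm{s}})\,\mathrm{d}\bm{x}=\mathbb{E}_{p(\bm{x})}\Big[\sum_{y=1}^{K}\tilde{p}(y|\bm{x})\ell(\bm{g}(\bm{x}),y)\Big].
\end{align*}
The division by $p(\bm{x}|y_{\mathrm{s}})$ is legitimate because the condition $p(y_{\mathrm{s}}|\bm{x})>0$ together with Bayes' rule guarantees $p(\bm{x}|y_{\mathrm{s}})>0$ on the support of $p(\bm{x})$, so $\phi$ is well defined there. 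This shows that $\tilde{R}(\bm{g})$ is exactly the expected loss under the marginal $p(\bm{x})$ when the conditional label distribution is taken to be the corrupted posterior $\tilde{p}(\cdot|\bm{x})$.

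Next I would exploit that this expectation decouples over $\bm{x}$. Since the loss is nonnegative, minimizing $\tilde{R}$ over all measurable $\bm{g}$ amounts to minimizing the inner conditional risk $\sum_{y}\tilde{p}(y|\bm{x})\ell(\bm{v},y)$ separately at each $\bm{x}$, and by the stated assumption the resulting pointwise minimizer $\tilde{\bm{g}}^{\mathrm{bayes}}(\bm{x})$ lies in $\bm{\mathcal{G}}$, so that constrained and unconstrained minimization agree. Here the key tool is the definition of a classification-calibrated loss: for any probability vector $\bm{q}$, the minimizer $\bm{v}^{*}$ of $\sum_{y}q_{y}\ell(\bm{v},y)$ satisfies $\mathop{\rm argmax}_{y}v^{*}_{y}=\mathop{\rm argmax}_{y}q_{y}$. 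Applying this with $\bm{q}=\tilde{p}(\cdot|\bm{x})$ yields $\mathop{\rm argmax}_{y}\tilde{g}^{\mathrm{bayes}}_{y}(\bm{x})=\mathop{\rm argmax}_{y}\tilde{p}(y|\bm{x})$ for every $\bm{x}$.

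Finally I would chain this identity with the two hypotheses. The assumption $\mathop{\rm argmax}_{y}\tilde{p}(y|\bm{x})=\mathop{\rm argmax}_{y}p(y|\bm{x})$ converts the predicted label into $\mathop{\rm argmax}_{y}p(y|\bm{x})$; and because the same calibration property applied to the clean posterior $p(\cdot|\bm{x})$ shows that the Bayes-optimal classifier $\bm{g}^{*}$ of the original risk $R$ also predicts $\mathop{\rm argmax}_{y}p(y|\bm{x})$, the argmax predictions of $\tilde{\bm{g}}^{\mathrm{bayes}}$ and $\bm{g}^{*}$ coincide at every $\bm{x}$, which is precisely classifier-consistency. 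The main obstacle is the pointwise calibration step: one must carefully state the multi-class classification-calibration property (that the argmax is preserved by the conditional-risk minimizer) and verify that the chosen losses --- softmax cross-entropy and mean squared error --- indeed satisfy it, invoking the result of \cite{MCC}. Once calibration is in hand, the change of measure and the argmax chaining are routine.
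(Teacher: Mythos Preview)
Your proposal is correct and follows essentially the same approach as the paper: both arguments perform the change of measure to rewrite $\tilde{R}(\bm{g})$ as the ordinary classification risk under the joint density $\tilde{p}(\bm{x},y)=\tilde{p}(y|\bm{x})p(\bm{x})$, invoke classification-calibration to conclude that the minimizer predicts $\mathop{\rm argmax}_{y}\tilde{p}(y|\bm{x})$, and then chain with the hypothesis $\mathop{\rm argmax}_{y}\tilde{p}(y|\bm{x})=\mathop{\rm argmax}_{y}p(y|\bm{x})$. The paper phrases the calibration step via the induced joint distribution and Proposition~\ref{P1}, whereas you phrase it via pointwise minimization of the inner conditional risk, but these are equivalent formulations of the same idea.
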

An intuitive explanation is that the decision boundary of the Bayes-optimal classifier is only determined by the class with the maximum class-posterior probability of each instance. Then we only have to adjust the distribution difference between the distribution of a single class and all the classes. The detailed proof is shown in Appendix C. With regard to the assumption on the noisy confidence, Theorem \ref{T2} only requires that for each instance $\bm{x}$, the class with the largest posterior probability in the original distribution should still have the largest value among all the classes in the noisy distribution. Notice that since there is no assumption on the exact value of the confidence scores, the confidence scores can be of arbitrary values in the $K$-dimensional probability simplex as long as the assumption above is satisfied. This assumption is realistic in practice. For example, when asking the annotators for labeling the data of a single class with confidence scores, they may not be sure about the exact value of the true class-posterior probability of each class. However, it can be easier for them to identify the class of ${\rm argmax}_{y\in\mathcal{Y}} p(y|\bm{x})$ and give the highest but not necessarily the accurate confidence score to this class. Though this kind of confidence score is not accurate in general, according to Theorem \ref{T2}, it is meaningful in the setting of NoRSC-Conf classification and can be used for generating consistent classifiers.

Denote by $\tilde{\bm{g}}^{*}$ the minimizer of (\ref{NSC}), then Theorem \ref{T2} means that $\tilde{\bm{g}}^{*}$ and $\bm{g}^{*}$ have the same prediction, i.e., ${\rm argmax}_{y\in\mathcal{Y}}\tilde{g}^{*}_{y}(\bm{x})={\rm argmax}_{y\in\mathcal{Y}}g^{*}_{y}(\bm{x})$ for all the $\bm{x}$. Thus the prediction of the minimizer of (\ref{NSC}) is infinite-sample consistent to that of the ordinary classification risk (\ref{OR}). Based on this result and $\phi_{i}=\phi(\bm{x}_{i})$, it is common practice to conduct ERM by minimizing the unbiased estimator of (\ref{NSC}): 
\begin{align}
\textstyle
\label{ERM-SC}
\hat{\tilde{R}}(\bm{g})=\frac{1}{n}\sum_{i=1}^{n}\phi_{i}\sum_{y=1}^{K}\tilde{r}^{y}_{i}\ell(\bm{g}(\bm{x}_{i}),y).
\end{align}

Denote by $\hat{\tilde{\bm{g}}}$ the minimizer of the empirical risk (\ref{NSC}). Using the same technology of bounding the uniform convergence as in Section \ref{sbound}, we can get the finite-sample analysis of the result: $\tilde{R}(\hat{\tilde{\bm{g}}})\rightarrow\tilde{R}({\tilde{\bm{g}}}^{*})$. Nevertheless, in the scenario of multi-class classification, we are concerning about the problem that if the misclassification risk of $\hat{\tilde{\bm{g}}}$ on the original distribution with density $p(\bm{x},y)$ converges to the optimal one, i.e., if $R_{01}(\hat{\tilde{\bm{g}}})\rightarrow R_{01}(\bm{g}_{01}^{*})$\footnote{According to the property of classification-calibrated loss, the minimizer $\bm{g}^{*}$ of $R(\bm{g})$ with surrogate loss $\ell$ is also the minimizer of misclassification rate $R_{01}(\bm{g})$ among all the measurable functions, where $1(\cdot)$ is the indicator function.}, where $R_{01}(\bm{g})=\mathbb{E}_{p(\bm{x},y)}\left[1({\rm argmax}_{y'\in\mathcal{Y}}g_{y'}(\bm{x})\not=y)\right]$ is the equivalent expression of misclassification rate of classifier $\bm{g}$, and $\bm{g}^{*}_{01}={\rm argmin}_{\bm{g}\in\bm{\mathcal{G}}}R_{01}(\bm{g})$ is the optimal classifier of misclassification rate. Since there is difference between the noisy and original distribution, and we cannot get the convergence result of $R_{01}(\hat{\tilde{\bm{g}}})$ directly from the fact that $\tilde{R}(\hat{\tilde{\bm{g}}})\rightarrow\tilde{R}({\tilde{\bm{g}}}^{*})$. Thus the finite-sample analysis of the misclassification rate of $\hat{\tilde{\bm{g}}}$ is non-trivial.

In the following section, we will give the answer to this question by exploiting the connection between $\tilde{p}(y|\bm{x})$ and $p(y|\bm{x})$ and give a finite-sample analysis on the misclassification rate of $\hat{\tilde{\bm{g}}}$.

\subsection{Finite-Sample Analysis on the Misclassification Rate}
In this section, we give a non-trivial finite-sample analysis on the misclassification rate of $\hat{\tilde{\bm{g}}}$: $R_{01}(\hat{\tilde{\bm{g}}})-R_{01}(\bm{g}_{01}^{*})$. To begin with, we first give the convergence analysis on $\tilde{R}(\hat{\tilde{\bm{g}}})\rightarrow\tilde{R}({\tilde{\bm{g}}}^{*})$.
\begin{lemma}\label{lbound}\!
Based on the assumptions in previous parts, for any 0<$\delta$<1, with probability at least 1-$\delta$:
\begin{align}\textstyle
\tilde{R}(\hat{\tilde{\bm{g}}})-\tilde{R}(\tilde{\bm{g}}^{*})\leq\frac{4\sqrt{2}L_{\ell}}{C_{r}}\sum_{y=1}^{K}\mathfrak{R}_{n}(\mathcal{G}_{y})+\frac{2C_{\ell}}{C_{r}}\sqrt{\frac{\log\frac{2}{\delta}}{2n}},
\end{align}
where $\mathfrak{R}_{n}(\mathcal{G}_{y})$ is the Rademacher complexity of $\mathcal{G}_{y}$ on \textit{i.i.d.} samples with size $n$ drawn from a single class $p(\bm{x}|y_{\mathrm{s}})$.
\end{lemma}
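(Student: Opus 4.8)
The plan is to run the standard empirical-process argument for uniform convergence, essentially reusing the proof of Theorem~\ref{Bound} with the importance weight $r^y/r^{y_{\mathrm s}}$ replaced by the reweighting $\phi(\bm x)\tilde r^y(\bm x)$. First I would record the boundedness facts that let $\tilde R$ be treated like a bounded risk. By Bayes' rule $p(\bm x\mid y_{\mathrm s})=p(y_{\mathrm s}\mid\bm x)p(\bm x)/\pi_{y_{\mathrm s}}$, so $\phi(\bm x)=p(\bm x)/p(\bm x\mid y_{\mathrm s})=\pi_{y_{\mathrm s}}/p(y_{\mathrm s}\mid\bm x)\le \pi_{y_{\mathrm s}}/C_r\le 1/C_r$ almost surely; combined with $\sum_{y=1}^{K}\tilde r^y(\bm x)=1$ and $0\le\ell\le C_\ell$, every per-sample summand $\phi_i\sum_{y}\tilde r^y_i\ell(\bm g(\bm x_i),y)$ lies in $[0,\,C_\ell/C_r]$. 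This interval of length $C_\ell/C_r$ is exactly what replaces the $\pi_{y_{\mathrm s}}C_\ell$ scale of Theorem~\ref{Bound} and is the source of the extra $1/C_r$ factors appearing in the bound.

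I would then use the usual ERM decomposition. Since $\hat{\tilde{\bm g}}$ minimizes $\hat{\tilde R}$, inserting and cancelling $\hat{\tilde R}(\hat{\tilde{\bm g}})$ and $\hat{\tilde R}(\tilde{\bm g}^*)$ gives $\tilde R(\hat{\tilde{\bm g}})-\tilde R(\tilde{\bm g}^*)\le \sup_{\bm g\in\bm{\mathcal G}}\bigl(\tilde R(\bm g)-\hat{\tilde R}(\bm g)\bigr)+\sup_{\bm g\in\bm{\mathcal G}}\bigl(\hat{\tilde R}(\bm g)-\tilde R(\bm g)\bigr)$. Viewing each one-sided supremum as a function of the i.i.d.\ single-class sample $\{\bm x_i\}_{i=1}^n$, replacing one $\bm x_i$ perturbs it by at most $C_\ell/(nC_r)$ by the interval bound above, so McDiarmid's inequality applied to each supremum at confidence level $\delta/2$ (and a union bound) yields, with probability at least $1-\delta$, a deviation of each from its mean of at most $\frac{C_\ell}{C_r}\sqrt{\log(2/\delta)/(2n)}$. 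The two deviations sum to the second term $\frac{2C_\ell}{C_r}\sqrt{\log(2/\delta)/(2n)}$.

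It remains to bound the two expected suprema. Standard symmetrization bounds each by $2\mathfrak R_n(\mathcal H)$, where $\mathcal H=\{\bm x\mapsto\phi(\bm x)\sum_{y}\tilde r^y(\bm x)\ell(\bm g(\bm x),y):\bm g\in\bm{\mathcal G}\}$ is the composite loss class, contributing a total factor of $4$. The crux --- and the step I expect to be the main obstacle --- is peeling the reweighting and the loss off $\mathcal H$ to expose $\sum_{y}\mathfrak R_n(\mathcal G_y)$. I would treat $h_i(\bm u)=\phi(\bm x_i)\sum_{y}\tilde r^y(\bm x_i)\ell(\bm u,y)$ as an index-dependent scalar function of $\bm u=\bm g(\bm x_i)$; since each $\ell(\cdot,y)$ is $L_\ell$-Lipschitz, $\phi_i\tilde r^y_i\ge0$, and $\phi_i\sum_{y}\tilde r^y_i=\phi_i\le 1/C_r$, every $h_i$ is $(L_\ell/C_r)$-Lipschitz \emph{uniformly in} $i$. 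Applying a vector-valued contraction inequality for Rademacher complexities --- the kind that allows the contracting map to depend on the index $i$ --- removes the loss together with the data-dependent weights at the price of a factor $\sqrt2\,L_\ell/C_r$, and decouples the product class $\bm{\mathcal G}=[\mathcal G_y]_{y=1}^{K}$ coordinatewise into $\sum_{y}\mathfrak R_n(\mathcal G_y)$. Multiplying the factor $4$ from the two symmetrizations by $\sqrt2\,L_\ell/C_r$ gives the leading term $\frac{4\sqrt2 L_\ell}{C_r}\sum_{y}\mathfrak R_n(\mathcal G_y)$.

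The delicate point to verify carefully is precisely this Lipschitz bookkeeping inside the contraction: one must confirm that folding the bounded nonnegative weights $\phi_i\tilde r^y_i$ into the loss preserves a single uniform Lipschitz constant $L_\ell/C_r$, so that the per-index vector-contraction inequality is legitimately applicable rather than a single fixed contraction; everything else is the routine concentration-plus-symmetrization machinery already used for Theorem~\ref{Bound}.
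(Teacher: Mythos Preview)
Your proposal is correct and follows essentially the same route as the paper: bound $\phi(\bm x)\le 1/C_r$, use McDiarmid with bounded differences $C_\ell/(nC_r)$, symmetrize, and apply Maurer's vector-contraction inequality (the paper's reference \cite{tala}) to the $(L_\ell/C_r)$-Lipschitz composite $\tilde{\mathcal L}(\bm g(\bm x))=\phi(\bm x)\sum_y\tilde r^y\ell(\bm g(\bm x),y)$ before doubling via the ERM decomposition. The only cosmetic difference is that the paper packages the uniform-convergence step as a separate two-sided lemma and then multiplies by~$2$, whereas you split into two one-sided suprema first; the constants and the key Lipschitz bookkeeping you flag as delicate are handled identically.
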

Combining this lemma with the Corollary 26 in \cite{zhanga}, we can directly bound the following term \textit{w.r.t.} 0-1 loss: $\tilde{R}_{01}(\hat{\tilde{\bm{g}}})-\tilde{R}_{01}({\tilde{\bm{g}}}^{*})$:
\begin{lemma}\label{tool} If the loss function is classification-calibrated, there exists a concave function $\xi$ on $[0,+\infty)$ such that $\xi(0) = 0$ and $\xi(\delta)\rightarrow0$ as $\delta \rightarrow 0^{+}$. For any 0<$\delta$<1, with probability at least 1-$\delta$:
\begin{align}
\textstyle
\tilde{R}_{01}(\hat{\tilde{\bm{g}}})-\tilde{R}_{01}({\tilde{\bm{g}}}^{*})\leq\xi\left(\tilde{R}(\hat{\tilde{\bm{g}}})-\tilde{R}(\tilde{\bm{g}}^{*})\right).
\end{align}
where $\tilde{R}_{01}(\bm{g})$ is the misclassification rate of $\bm{g}$ on the noisy distribution with density $\tilde{p}(\bm{x},y)=\tilde{p}(y|\bm{x})p(\bm{x})$. The validity of the density $\tilde{p}(\bm{x},y)$ is shown in the proof of Theorem \ref{T2}.
\end{lemma}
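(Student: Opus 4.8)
The plan is to recognize the stated inequality as an instance of the surrogate-to-$0$-$1$ calibration transform applied on the noisy joint distribution, and to verify the small number of ingredients that make Corollary 26 of \cite{zhanga} apply verbatim.

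First I would rewrite $\tilde{R}(\bm{g})$ as an ordinary surrogate risk on a genuine distribution. Using $\phi(\bm{x})=p(\bm{x})/p(\bm{x}|y_{\mathrm{s}})$ together with the importance-weighting identity,
\begin{align}
\tilde{R}(\bm{g})=\mathbb{E}_{p(\bm{x}|y_{\mathrm{s}})}\Bigl[\phi(\bm{x})\sum_{y=1}^{K}\tilde{r}^{y}(\bm{x})\ell(\bm{g}(\bm{x}),y)\Bigr]=\mathbb{E}_{p(\bm{x})}\Bigl[\sum_{y=1}^{K}\tilde{p}(y|\bm{x})\ell(\bm{g}(\bm{x}),y)\Bigr]=\mathbb{E}_{\tilde{p}(\bm{x},y)}\bigl[\ell(\bm{g}(\bm{x}),y)\bigr],
\end{align}
so that $\tilde{R}$ is exactly the $\ell$-risk, and $\tilde{R}_{01}$ the $0$-$1$ risk, on the joint density $\tilde{p}(\bm{x},y)=\tilde{p}(y|\bm{x})p(\bm{x})$. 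The only structural fact needed here is that $\tilde{p}(\bm{x},y)$ is a bona fide probability density (nonnegativity and normalization), and this is already established in the proof of Theorem \ref{T2}.

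Next I would invoke the calibration machinery. Because $\ell$ is classification-calibrated and $\tilde{p}(\bm{x},y)$ is a fixed distribution, Corollary 26 of \cite{zhanga} supplies a concave, nondecreasing function $\xi$ on $[0,+\infty)$ with $\xi(0)=0$ and $\xi(\delta)\to0$ as $\delta\to0^{+}$ such that, for every measurable $\bm{g}$,
\begin{align}
\tilde{R}_{01}(\bm{g})-\inf_{\bm{g}'~\text{measurable}}\tilde{R}_{01}(\bm{g}')\leq\xi\Bigl(\tilde{R}(\bm{g})-\inf_{\bm{g}'~\text{measurable}}\tilde{R}(\bm{g}')\Bigr).
\end{align}
I would then specialize $\bm{g}=\hat{\tilde{\bm{g}}}$ and replace the two Bayes infima by the values at $\tilde{\bm{g}}^{*}$. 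Under the assumption that the Bayes-optimal classifier $\tilde{\bm{g}}^{\mathrm{bayes}}$ lies in $\bm{\mathcal{G}}$, the surrogate minimizer $\tilde{\bm{g}}^{*}$ attains $\inf\tilde{R}$; and by the defining property of classification-calibrated losses—the surrogate minimizer is also a $0$-$1$ minimizer, exactly as noted in the footnote for the clean distribution—$\tilde{\bm{g}}^{*}$ also attains $\inf\tilde{R}_{01}$. Substituting these equalities turns the calibration bound into the claimed $\tilde{R}_{01}(\hat{\tilde{\bm{g}}})-\tilde{R}_{01}(\tilde{\bm{g}}^{*})\leq\xi(\tilde{R}(\hat{\tilde{\bm{g}}})-\tilde{R}(\tilde{\bm{g}}^{*}))$. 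The inequality is in fact deterministic; the ``with probability at least $1-\delta$'' clause only anticipates composing it with Lemma \ref{lbound}, where monotonicity of $\xi$ lets one upper-bound the right-hand side by $\xi$ evaluated at the high-probability surrogate bound.

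The main obstacle I anticipate is not the calibration inequality itself, which is Zhang's result, but justifying the exact substitution of the Bayes infima by the values at $\tilde{\bm{g}}^{*}$. This step relies squarely on the realizability assumption $\tilde{\bm{g}}^{\mathrm{bayes}}\in\bm{\mathcal{G}}$: without it, $\tilde{\bm{g}}^{*}$ would minimize only over $\bm{\mathcal{G}}$, and the excess risks on the two sides would then be measured against different baselines, breaking the clean form of the bound. A secondary care point is ensuring that $\xi$ can be chosen uniformly for the distribution $\tilde{p}$; this is precisely the content of the distribution-independent calibration function constructed in \cite{zhanga}, so beyond checking that $\ell$ is classification-calibrated no further argument is needed.
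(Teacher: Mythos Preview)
Your proposal is correct and follows exactly the route the paper indicates: the paper does not give a standalone proof of this lemma but states it as a direct consequence of Corollary 26 in \cite{zhanga} applied to the noisy distribution $\tilde{p}(\bm{x},y)$, together with Lemma \ref{lbound}. Your write-up spells out precisely the ingredients the paper leaves implicit---the rewriting of $\tilde{R}$ as a standard surrogate risk on $\tilde{p}$, the realizability assumption needed to replace the Bayes infima by $\tilde{\bm{g}}^{*}$, and the observation that the inequality itself is deterministic---so there is nothing to add.
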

The lemma above shows that $\tilde{R}_{01}(\hat{\tilde{\bm{g}}})\rightarrow\tilde{R}_{01}({\tilde{\bm{g}}}^{*})$ in probability as $n\rightarrow+\infty$. However, this conclusion is still not informative enough: since there is a gap between the noisy density $\tilde{p}(\bm{x},y)$ and the original distribution $p(\bm{x},y)$, the conclusion $\tilde{R}_{01}(\hat{\tilde{\bm{g}}})\rightarrow\tilde{R}_{01}({\tilde{\bm{g}}}^{*})$ in probability cannot directly give an answer to the question that if the risk of the empirical minimizer of NoRSC-Conf classification $R_{01}(\hat{\tilde{\bm{g}}})$ can converge to the optimal misclassification rate $R_{01}(\bm{g}_{01}^{*})$ on the original distribution in probability. To answer this question, we have to make use of the connection between $\tilde{p}(\bm{x},y)$ and $p(\bm{x},y)$. For each $\bm{x}\in\mathcal{X}$, let $\Delta(\bm{x})\!=\!\tilde{p}(y|\bm{x})\!-\!\tilde{p}(y'|\bm{x})$, where $y$ and $y'$ are the classes with the largest and the second largest posterior possibilities, respectively. Then we have the following conclusion: 
\begin{theorem} \label{final}
Suppose $\inf_{\bm{x}\in\mathcal{X}}\Delta(\bm{x})>0$, then for any $0<\delta<1$, with probability at least $1-\delta$: 
\begin{align}
\textstyle
R_{01}(\hat{\tilde{\bm{g}}})-R_{01}({\bm{g}}_{01}^{*})\leq\frac{1}{\inf_{\bm{x}\in\mathcal{X}}\Delta(\bm{x})}&\xi\left({\tilde{R}(\hat{\tilde{\bm{g}}})-\tilde{R}(\tilde{\bm{g}}^{*})}\right)
\end{align}
\end{theorem}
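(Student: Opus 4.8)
The plan is to connect the $0$--$1$ risk gap on the original distribution $p(\bm{x},y)$ to the already-bounded $0$--$1$ risk gap on the noisy distribution $\tilde{p}(\bm{x},y)$ from Lemma~\ref{tool}. The crucial observation, inherited from Theorem~\ref{T2}, is that $\mathop{\rm argmax}_{y}\tilde{p}(y|\bm{x})=\mathop{\rm argmax}_{y}p(y|\bm{x})$ pointwise, so the Bayes-optimal \emph{prediction} is identical under both distributions; in particular $\bm{g}^{*}_{01}=\tilde{\bm{g}}^{*}_{01}$ (they induce the same decision rule), and the optimal misclassification excess we want to control is really about how far the learned $\hat{\tilde{\bm{g}}}$ departs from this common Bayes rule. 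The strategy is therefore to express both $R_{01}(\hat{\tilde{\bm{g}}})-R_{01}(\bm{g}_{01}^{*})$ and $\tilde{R}_{01}(\hat{\tilde{\bm{g}}})-\tilde{R}_{01}(\tilde{\bm{g}}^{*})$ as integrals, over the marginal $p(\bm{x})$, of the pointwise probability of disagreeing with the Bayes label, weighted by a penalty, and then to bound the first integrand by a constant multiple of the second.

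Concretely, I would first write, for a fixed $\bm{x}$, the excess conditional $0$--$1$ risk. Let $y^{*}(\bm{x})=\mathop{\rm argmax}_{y}p(y|\bm{x})$ denote the common Bayes label and let $\hat{y}(\bm{x})=\mathop{\rm argmax}_{y}\hat{\tilde{g}}_{y}(\bm{x})$. The pointwise excess under the original posterior is
\begin{align}
\textstyle
\bigl(p(y^{*}(\bm{x})|\bm{x})-p(\hat{y}(\bm{x})|\bm{x})\bigr)\,\mathbf{1}(\hat{y}(\bm{x})\neq y^{*}(\bm{x})),
\end{align}
and the analogous quantity under the noisy posterior replaces $p$ by $\tilde{p}$. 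The first factor under the original distribution is at most $1$, while under the noisy distribution, on the event $\hat{y}(\bm{x})\neq y^{*}(\bm{x})$, the gap $\tilde{p}(y^{*}(\bm{x})|\bm{x})-\tilde{p}(\hat{y}(\bm{x})|\bm{x})$ is at least $\tilde{p}(y^{*}(\bm{x})|\bm{x})-\tilde{p}(y'(\bm{x})|\bm{x})=\Delta(\bm{x})\geq\inf_{\bm{x}}\Delta(\bm{x})$, since $\hat{y}(\bm{x})$ is some class other than the noisy-argmax $y^{*}(\bm{x})$ and hence its noisy posterior is bounded above by the second-largest value $\tilde{p}(y'(\bm{x})|\bm{x})$. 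This is exactly where the assumption $\inf_{\bm{x}}\Delta(\bm{x})>0$ is used.

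Putting these together pointwise gives the key domination
\begin{align}
\textstyle
\bigl(p(y^{*}|\bm{x})-p(\hat{y}|\bm{x})\bigr)\mathbf{1}(\hat{y}\neq y^{*})
\leq \frac{1}{\inf_{\bm{x}}\Delta(\bm{x})}\bigl(\tilde{p}(y^{*}|\bm{x})-\tilde{p}(\hat{y}|\bm{x})\bigr)\mathbf{1}(\hat{y}\neq y^{*}),
\end{align}
because the left factor is $\le 1$ and the right factor is $\ge \inf_{\bm{x}}\Delta(\bm{x})$ on the indicator's support. Integrating both sides against $p(\bm{x})$ recovers $R_{01}(\hat{\tilde{\bm{g}}})-R_{01}(\bm{g}_{01}^{*})$ on the left and $\frac{1}{\inf_{\bm{x}}\Delta(\bm{x})}\bigl(\tilde{R}_{01}(\hat{\tilde{\bm{g}}})-\tilde{R}_{01}(\tilde{\bm{g}}^{*})\bigr)$ on the right, where I use that both noisy and original misclassification rates share the marginal $p(\bm{x})$ (the density factorizes as $\tilde{p}(\bm{x},y)=\tilde{p}(y|\bm{x})p(\bm{x})$) and that $\tilde{\bm{g}}^{*}$ and $\bm{g}_{01}^{*}$ realize the same common Bayes rule. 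Finally I would substitute the high-probability bound of Lemma~\ref{tool}, $\tilde{R}_{01}(\hat{\tilde{\bm{g}}})-\tilde{R}_{01}(\tilde{\bm{g}}^{*})\le\xi\bigl(\tilde{R}(\hat{\tilde{\bm{g}}})-\tilde{R}(\tilde{\bm{g}}^{*})\bigr)$, which holds with probability at least $1-\delta$, to obtain the claimed inequality.

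The step I expect to be the main obstacle is making the pointwise domination fully rigorous at the level of the excess-risk decompositions, rather than just for the Bayes rule. The subtlety is that the conditional $0$--$1$ excess risk of an arbitrary classifier equals $\bigl(\eta_{y^{*}(\bm{x})}(\bm{x})-\eta_{\hat{y}(\bm{x})}(\bm{x})\bigr)$ only when we compare against the true conditional Bayes predictor, so I must verify that $\tilde{\bm{g}}^{*}$ (the surrogate minimizer on the noisy distribution) indeed achieves the noisy conditional Bayes rule pointwise --- this follows from classification-calibration as invoked in Theorem~\ref{T2} and the footnote, but it needs to be stated cleanly so that $\tilde{R}_{01}(\tilde{\bm{g}}^{*})=\min_{\bm{g}}\tilde{R}_{01}(\bm{g})$ and similarly $R_{01}(\bm{g}_{01}^{*})=\min_{\bm{g}}R_{01}(\bm{g})$, ensuring both excess quantities are genuinely nonnegative integrands of the claimed form. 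Handling the possibility of ties in the argmax and confirming that the disagreement event is identical under both distributions (which is what lets the single indicator appear on both sides) are the remaining details to treat carefully.
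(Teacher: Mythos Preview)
Your proposal is correct and follows essentially the same route as the paper's proof: both express the excess $0$--$1$ risk on each distribution as $\mathbb{E}_{p(\bm{x})}\bigl[\mathbf{1}(\hat{y}\neq y^{*})(\eta_{y^{*}}-\eta_{\hat{y}})\bigr]$, bound the original-distribution factor above by $1$ and the noisy-distribution factor below by $\inf_{\bm{x}}\Delta(\bm{x})$, use the coincidence of the Bayes labels to equate the indicator events, and then invoke Lemma~\ref{tool}. The only cosmetic difference is that the paper first isolates the bound $\mathbb{E}_{p(\bm{x})}[\mathbf{1}(\hat{y}\neq y^{*})]\le\frac{1}{\inf_{\bm{x}}\Delta(\bm{x})}(\tilde{R}_{01}(\hat{\tilde{\bm{g}}})-\tilde{R}_{01}(\tilde{\bm{g}}^{*}))$ as an intermediate step, whereas you pass directly through the pointwise domination; the content is identical.
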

We prove this conclusion in Appendix D. Combining the Lemma \ref{lbound}, Theorem \ref{final}, and the property of function $\xi$, we can finally say that with the increasing of sample size $n$, the misclassification rate of $\hat{\tilde{\bm{g}}}$ converges to the optimal misclassification rate $R_{01}(\bm{g}_{01}^{*})$ in probability. According to this conclusion, we can see that the minimizer of the empirical risk \ref{ERM-SC} possesses finite-sample consistency, which justifies the use of ERM.
\begin{remark}
\label{r1}
\rm A representative scenario is that only the class with maximum class-posterior probability is identified for each data point. Then the noisy confidences are given in the form of one-hot codes: $\tilde{\bm{r}}(\bm{x})=[0,\cdots,1,\cdots,0]$, where the only non-zero element 1 is on the $y_{sc}$-th position and $y_{sc}$ is the class with maximum class-posterior probability. In this scenario, almost all the details of the confidence of each class are lost and the given confidences are extremely noisy. It can be seen that in this scenario, $\frac{1}{\inf_{\bm{x}\in\mathcal{X}}\Delta(\bm{x})}=1$. According to Theorem \ref{final}, the result of our NoRSC-Conf classification can still converge to the Bayes-optimal classifier in probability. 
\end{remark}
\subsection{Efficient Estimation of Density Ratio $\phi$(\textit{\textbf{x}}) with Unlabeled Data}
Though the infinite-sample consistency and the finite-sample consistency of our NoRSC-Conf classification have been established, there is still an important problem to be solved: the estimation of the weight term $\phi(\bm{x})$. In this section, we show that additional information of the class labels is not required in the estimation of the weight term $\phi(\bm{x})$ and only the data from only a single class and unlabeled data are needed. 

Let $\phi(\bm{x})$ be the true density ratio $\frac{p(\bm{x})}{p(\bm{x}|y_{\mathrm{s}})}$ and $\hat{\phi}(\bm{x})$ be the estimated one. We can empirically approximate $\phi(\bm{x})$ by the density ratio matching method \cite{Density}. First, we give the definition of the Bregman divergence, which can measure the discrepancy between two density ratio models:
\begin{definition}\cite{Density}
For any differentiable and strictly convex function $\eta$: $\mathbb{R}\rightarrow\mathbb{R}$, $\nabla \eta(t)$ is the subgradient of $\eta$. The Bregman divergence of $\eta$ between the true density ratio $\phi(\bm{x})$ and the estimated density ratio $\hat{\phi}(\bm{x})$ is given as:
\begin{align*}\textstyle
B_{\eta}(\phi\|\hat{\phi})=\int p(\bm{x}|y_{\mathrm{s}})\nabla\eta(\hat{\phi}(\bm{x}))\hat{\phi}(\bm{x})d\bm{x}-\int p(\bm{x}|y_{\mathrm{s}})\eta(\hat{\phi}(\bm{x}))d\bm{x}-\int p(\bm{x})\nabla\eta(\hat{\phi}(\bm{x}))d\bm{x},
\end{align*}
and its unbiased estimator is given as:
\begin{align}
\textstyle
\label{eb}
\hat{B}_{\eta}(\phi\|\hat{\phi})=\frac{1}{n}\sum_{i=1}^{n}\nabla\eta(\hat{\phi}(\bm{x}_{i}))\hat{\phi}(\bm{x}_{i})-\frac{1}{n}\sum_{i=1}^{n}\eta(\hat{\phi}(\bm{x}_{i}))-\frac{1}{n_{u}}\sum_{i=1}^{n_{u}}\nabla\eta(\hat{\phi}(\bm{x}_{i}^{u})).    
\end{align}
\end{definition}

Since we are given data from the single class $y_{\mathrm{s}}$ and the unlabeled data, we can estimate the true density ratio by minimizing
the empirical Bregman divergence (\ref{eb}). Denote by $\Phi$ the function class of ratio model and $\hat{\phi}^{*}={\rm argmin}_{\hat{\phi}\in\Phi}\hat{B}_{\eta}(\phi\|\hat{\phi})$ and assume that the true ratio model $\phi\in\Phi$, we have the following estimation error bound:
\begin{theorem}
\label{final_bound}
Based on the assumptions above, for any $0<\delta<1$, with probability at least $1-\delta$:
\begin{align*}
\textstyle
B_{\eta}(\phi\|\hat{\phi}^{*})\leq C_{1}\mathfrak{R}_{n}(\Phi)+C_{2}\mathfrak{R}^{u}_{n_{u}}(\Phi)+M_{1}\sqrt{\frac{\ln \frac{4}{\delta}}{2n}}+M_{2}\sqrt{\frac{\ln \frac{4}{\delta}}{2n_{u}}}
\end{align*}
where $\mathfrak{R}_{n}(\Phi)$ is the Rademacher complexity of function class $\Phi$ on \textit{i.i.d.} samples with size $n$ drawn from a single class $p(\bm{x}|y_{\mathrm{s}})$, $\mathfrak{R}^{u}_{n_{u}}(\Phi)$ is the Rademacher complexity of function class $\Phi$ on \textit{i.i.d.} samples with size $n_{u}$ drawn from marginal distribution with density $p(\bm{x})$. Meanwhile, $C_{1},~C_{2},~M_{1},~M_{2}$ are constants.
\end{theorem}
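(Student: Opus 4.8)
The plan is to treat this as a standard empirical-risk-minimization estimation-error bound in which the ``risk'' being minimized is the (population) Bregman divergence and its sample counterpart is $\hat{B}_\eta$. Two facts drive the argument. First, for any fixed $\hat{\phi}\in\Phi$, the estimator $\hat{B}_\eta(\phi\|\hat{\phi})$ in~(\ref{eb}) is unbiased for $B_\eta(\phi\|\hat{\phi})$, since the two single-class averages estimate the $p(\bm{x}|y_{\mathrm{s}})$-expectations and the unlabeled average estimates the $p(\bm{x})$-expectation. Second, by strict convexity of $\eta$ the true ratio $\phi$ is the population minimizer of $B_\eta(\phi\|\cdot)$, so $\phi\in\Phi$ together with the definition $\hat{\phi}^*=\mathop{\rm argmin}_{\hat{\phi}\in\Phi}\hat{B}_\eta(\phi\|\hat{\phi})$ yields the usual decomposition of the excess divergence,
\[
B_\eta(\phi\|\hat{\phi}^*)-B_\eta(\phi\|\phi)\le\big[B_\eta(\phi\|\hat{\phi}^*)-\hat{B}_\eta(\phi\|\hat{\phi}^*)\big]+\big[\hat{B}_\eta(\phi\|\phi)-B_\eta(\phi\|\phi)\big],
\]
where the middle term $\hat{B}_\eta(\phi\|\hat{\phi}^*)-\hat{B}_\eta(\phi\|\phi)\le0$ is dropped by optimality of $\hat{\phi}^*$. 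Both surviving brackets are controlled by the single uniform-deviation quantity $\sup_{\hat{\phi}\in\Phi}\lvert B_\eta(\phi\|\hat{\phi})-\hat{B}_\eta(\phi\|\hat{\phi})\rvert$.

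The next step is to split this supremum into two independent empirical processes. Writing $h_1(\hat{\phi},\bm{x})=\nabla\eta(\hat{\phi}(\bm{x}))\hat{\phi}(\bm{x})-\eta(\hat{\phi}(\bm{x}))$ and $h_2(\hat{\phi},\bm{x})=\nabla\eta(\hat{\phi}(\bm{x}))$, the deviation is the difference of a single-class deviation $\mathbb{E}_{p(\bm{x}|y_{\mathrm{s}})}[h_1]-\frac{1}{n}\sum_i h_1$ and an unlabeled deviation $\mathbb{E}_{p(\bm{x})}[h_2]-\frac{1}{n_u}\sum_i h_2$. I would bound each by the familiar two-step route: apply McDiarmid's bounded-differences inequality to concentrate each supremum around its expectation, contributing the $M_1\sqrt{\ln(4/\delta)/(2n)}$ and $M_2\sqrt{\ln(4/\delta)/(2n_u)}$ terms, where $M_1,M_2$ are uniform bounds on $h_1,h_2$; then apply symmetrization to dominate each expected supremum by the Rademacher complexity of the composed class $\{h_1(\hat{\phi},\cdot):\hat{\phi}\in\Phi\}$, respectively $\{h_2(\hat{\phi},\cdot):\hat{\phi}\in\Phi\}$.

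Finally I would reduce the complexity of these composed classes to $\mathfrak{R}_n(\Phi)$ and $\mathfrak{R}^u_{n_u}(\Phi)$ via Talagrand's contraction lemma, which requires $h_1$ and $h_2$ to be Lipschitz in the scalar argument $t=\hat{\phi}(\bm{x})$. A direct computation gives $\tfrac{d}{dt}h_1(t)=t\,\nabla^2\eta(t)$ and $\tfrac{d}{dt}h_2(t)=\nabla^2\eta(t)$, so the relevant Lipschitz constants, which become $C_1$ and $C_2$, are finite whenever $t$ and $\nabla^2\eta$ stay bounded over the effective range of $\Phi$. A union bound over the two concentration events (splitting $\delta$ into $\delta/4$ per event, accounting for the two empirical processes and their two-sided deviations) then produces the $\ln\tfrac{4}{\delta}$ factors in the statement.

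The step I expect to be the main obstacle is establishing boundedness and Lipschitzness of the maps $t\mapsto\nabla\eta(t)\,t-\eta(t)$ and $t\mapsto\nabla\eta(t)$ \emph{uniformly} over the range of $\Phi$. For typical choices of $\eta$ the quantities $\nabla\eta$ and $\nabla^2\eta$ blow up near the boundary of the domain (e.g.\ for $\eta(t)=t\log t-t$ one has $\nabla^2\eta(t)=1/t\to\infty$ as $t\to0^+$), so the argument genuinely relies on assuming that members of $\Phi$ take values in a compact interval bounded away from the singularities of $\eta$. Pinning down $C_1,C_2,M_1,M_2$ as explicit suprema of $\lvert t\nabla^2\eta(t)\rvert$, $\lvert\nabla^2\eta(t)\rvert$, $\lvert\nabla\eta(t)t-\eta(t)\rvert$, and $\lvert\nabla\eta(t)\rvert$ over that interval is where the care is needed; the concentration and symmetrization steps are then routine.
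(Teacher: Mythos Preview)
Your proposal is correct and follows essentially the same route as the paper: the same ERM-style decomposition using $B_\eta(\phi\|\phi)=0$ and optimality of $\hat{\phi}^*$, the same split of the empirical Bregman divergence into a single-class part and an unlabeled part, and the same McDiarmid-plus-symmetrization-plus-contraction argument on each piece with a union bound yielding the $\ln\tfrac{4}{\delta}$ factor. The only cosmetic difference is that the paper assumes directly that $\eta$ is bounded and Lipschitz and that $\nabla\eta$ is Lipschitz on the range of $\Phi$ (rather than computing $h_1'(t)=t\nabla^2\eta(t)$ and $h_2'(t)=\nabla^2\eta(t)$ as you do), which sidesteps the singularity issue you flag by fiat.
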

The proof is given in Appendix E. This estimation error bound guarantees that the estimated ratio $\hat{\phi}$ converges to the true one in the rate of $\mathcal{O}_{p}(1\!/\!\sqrt{n_{u}}\!+\!1\!/\!\sqrt{n})$. In other words, our estimated ratio model will be more accurate as the numbers of data from the single class $y_{\mathrm{s}}$ and unlabeled data increases.
\section{Sub-Conf Classification}
\label{S5}
In many practical situations, the data may not be collected from only a single class $y_{\mathrm{s}}\in\mathcal{Y}$ but from a subset $\mathcal{Y}_{s}\subset\mathcal{Y}$ of all the classes. Here we give the data generation process of this kind of data. Suppose we have data drawn from a subset of all the classes and their confidence: $\{\bm{x}_{i},\bm{r}_{i}\}_{i=1}^{}$, where the confidence is the same with that in Section \ref{S3} and the data $\{\bm{x}_{i}\}_{i=1}^{n}$ are \textit{i.i.d.} samples drawn from an unknown distribution with conditional density $p(\bm{x}|y\in\mathcal{Y}_{s})$. Then we can get an equivalent risk expression of the classification risk (\ref{OR}), which is similar to the formulation (\ref{SC}):
\begin{theorem}\label{TF}
The multi-class classification risk can be recovered from data drawn from a subset of the collection of all the classes, i.e., $\mathcal{Y}_{s}\subset\mathcal{Y}$, if their confidence scores are given:
\begin{align}
\textstyle
R(\bm{g})=\pi_{\mathcal{Y}_{s}}\mathbb{E}_{p(\bm{x}|y\in\mathcal{Y}_{s})}\left[\sum_{y=1}^{K}\frac{r^{y}(\bm{x})}{r^{\mathcal{Y}_{s}}(\bm{x})}\ell(\bm{g}(\bm{x}),y)\right],
\end{align}
where $\pi_{\mathcal{Y}_{s}}=\sum_{y_{\mathrm{s}}\in\mathcal{Y}_{s}}\pi_{y_{\mathrm{s}}}$ and $r^{\mathcal{Y}_{s}}(\bm{x})=\sum_{y_{\mathrm{s}}\in\mathcal{Y}_{s}}r^{y_{\mathrm{s}}}(\bm{x})$.
\end{theorem}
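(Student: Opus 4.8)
The plan is to mirror the importance-weighting argument behind Theorem \ref{URE-SC}, replacing the single class $y_{\mathrm{s}}$ by the aggregated event $\{y\in\mathcal{Y}_{s}\}$. First I would rewrite the ordinary risk (\ref{OR}) as an integral against the marginal $p(\bm{x})$ by decomposing the joint density through the class-posterior probabilities, using $p(\bm{x},y)=p(y|\bm{x})p(\bm{x})=r^{y}(\bm{x})p(\bm{x})$:
\begin{align*}
R(\bm{g})=\int p(\bm{x})\sum_{y=1}^{K}r^{y}(\bm{x})\ell(\bm{g}(\bm{x}),y)\,d\bm{x}.
\end{align*}
This is exactly the ``duplication'' step noted after Theorem \ref{URE-SC}: each instance $\bm{x}$ contributes through all $K$ confidence entries $r^{y}(\bm{x})$, each weighted by the corresponding loss.

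The key step is to replace $p(\bm{x})$ by the available conditional $p(\bm{x}|y\in\mathcal{Y}_{s})$. By Bayes' rule applied to the event $\{y\in\mathcal{Y}_{s}\}$, the posterior of this event is the summed confidence $p(y\in\mathcal{Y}_{s}|\bm{x})=\sum_{y_{\mathrm{s}}\in\mathcal{Y}_{s}}r^{y_{\mathrm{s}}}(\bm{x})=r^{\mathcal{Y}_{s}}(\bm{x})$ and its marginal probability is $\sum_{y_{\mathrm{s}}\in\mathcal{Y}_{s}}\pi_{y_{\mathrm{s}}}=\pi_{\mathcal{Y}_{s}}$, so that
\begin{align*}
p(\bm{x}|y\in\mathcal{Y}_{s})=\frac{r^{\mathcal{Y}_{s}}(\bm{x})\,p(\bm{x})}{\pi_{\mathcal{Y}_{s}}},\qquad\text{equivalently}\qquad p(\bm{x})=\frac{\pi_{\mathcal{Y}_{s}}}{r^{\mathcal{Y}_{s}}(\bm{x})}\,p(\bm{x}|y\in\mathcal{Y}_{s}).
\end{align*}
Substituting this into the integral and pulling the constant $\pi_{\mathcal{Y}_{s}}$ in front, the importance weight $r^{\mathcal{Y}_{s}}(\bm{x})^{-1}$ attaches to each confidence entry and the integral becomes an expectation over $p(\bm{x}|y\in\mathcal{Y}_{s})$, yielding the claimed identity. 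Taking $\mathcal{Y}_{s}=\{y_{\mathrm{s}}\}$ then recovers Theorem \ref{URE-SC} as a special case, which is a useful sanity check.

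The main obstacle I expect is not the algebra but the well-definedness of the change of measure: the density ratio $\pi_{\mathcal{Y}_{s}}/r^{\mathcal{Y}_{s}}(\bm{x})$ and the per-class weights $r^{y}(\bm{x})/r^{\mathcal{Y}_{s}}(\bm{x})$ require $r^{\mathcal{Y}_{s}}(\bm{x})>0$ on the support of $p(\bm{x})$, which is the set-valued analogue of the hypothesis $p(y_{\mathrm{s}}|\bm{x})>0$ used in Theorem \ref{URE-SC}. I would therefore state this positivity condition explicitly and verify that under it $p(\bm{x}|y\in\mathcal{Y}_{s})$ is a valid conditional density whose support contains that of $p(\bm{x})$, so that no mass of the original marginal is lost when the integral is re-expressed. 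Once this is in place, the remaining manipulations are the routine linearity and interchange-of-sum-and-integral steps already invoked in the proof of Theorem \ref{URE-SC}.
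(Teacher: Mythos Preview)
Your proposal is correct and follows essentially the same importance-weighting argument as the paper: both use Bayes' rule for the aggregated event $\{y\in\mathcal{Y}_{s}\}$ to convert between $p(\bm{x})$ and $p(\bm{x}|y\in\mathcal{Y}_{s})$ via the ratio $\pi_{\mathcal{Y}_{s}}/r^{\mathcal{Y}_{s}}(\bm{x})$, the only cosmetic difference being that the paper starts from the right-hand side and simplifies to $R(\bm{g})$ while you start from $R(\bm{g})$ and expand. Your explicit mention of the positivity condition $r^{\mathcal{Y}_{s}}(\bm{x})>0$ on the support of $p(\bm{x})$ is a welcome clarification that the paper leaves implicit in the statement of Theorem~\ref{TF}.
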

The proof is shown in Appendix F. With this conclusion, we can approximate the classification risk (\ref{OR}) with data from a subset of all the classes as:
\begin{align}\textstyle\label{A}
\hat{R}_{\mathrm{sub}}(\bm{g})=\frac{\pi_{\mathcal{Y}_{s}}}{n}\sum_{i=1}^{n}\sum_{y=1}^{K}\frac{r^{y}_{i}}{r^{\mathcal{Y}_{s}}_{i}}\ell(\bm{g}(\bm{x}_{i}),y),
\end{align}
where $r^{\mathcal{Y}_{s}}_{i}=\sum_{y_{\mathrm{s}}\in\mathcal{Y}_{s}}r^{y_{\mathrm{s}}}_{i}$. Then it is routine to conduct ERM based on the unbiased risk estimator (\ref{A}). When $|\mathcal{Y}_{s}|=1$, the Sub-Conf classification is equivalent to the SC-Conf classification, which indicates that Sub-Conf classification is the generalization of SC-Conf classification. Similarly, the class-prior probability $\pi_{\mathcal{Y}_{s}}$ can be eliminated in the process of ERM.
\begin{remark}
{\rm A potential way of constructing equivalent risk expression with data from classes $\mathcal{Y}_{s}\subset\mathcal{Y}$ is the convex combination of SC-Conf classification:
$$R(\bm{g})=\sum\nolimits_{y_{\mathrm{s}}\in\mathcal{Y}_{s}}\alpha_{y_{\mathrm{s}}}\pi_{y_{\mathrm{s}}}\mathbb{E}_{p(\bm{x}|y_{\mathrm{s}})}\left[\sum\nolimits_{y=1}^{K}\frac{r^{y}(\bm{x})}{r^{y_{\mathrm{s}}}(\bm{x})}\ell(\bm{g}(\bm{x}),y)\right],~~{\rm s.t.}~\sum\nolimits_{y_{\mathrm{s}}\in\mathcal{Y}_{s}}\alpha_{y_{\mathrm{s}}}=1,~~\alpha_{y_{\mathrm{s}}}\geq 0,$$
which is a straightforward extension of SC-Conf classification. However, this method requires that we have the exact label $y_{\mathrm{s}}\in\mathcal{Y}_{s}$ for each instance. In contrast, it can be seen from the formulation of the empirical risk (\ref{A}) that the exact class label for each instance is not necessary in our Sub-Conf classification method, which shows that our Sub-Conf classification method is non-trivial.} 
\end{remark}

It can be seen that Sub-Conf classification and SC-Conf classification share the nearly identical risk formulation and data generation process and we can get a similar conclusion for Sub-Conf classification on its convergence analysis by substituting $y_{\mathrm{s}}$ with $\mathcal{Y}_{s}$. Given space limitations, we omit the convergence analysis of Sub-Conf classification and show the specific formulation of Noise-Robust Sub-Conf classification in the Appendix G.  
\section{Experiments}
\label{se}
In this section, we experimentally show the usefulness of our proposed methods for training deep models on two benchmark datasets. The implementation is conducted on
NVIDIA GeForce RTX 3090 GPUs based on Pytorch \cite{pytorch} and Sklean \cite{sklearn}. \vspace{-7pt}
\paragraph{Baselines:}In the experiments, SC-Conf, NoRSC-Conf, and Sub-Conf are short for ERM with risk estimators (\ref{USC}), (\ref{ERM-SC}), and (\ref{A}). We compare our methods to a simple weighted classification estimator: 
\begin{align}
\textstyle
\label{weight}
\hat{R}_{w}(\bm{g})=\frac{1}{n}\sum_{i=1}^{n}\sum_{y=1}^{K}r^{y}_{i}\ell(\bm{g}(\bm{x}_{i}),y).
\end{align}
This binary version of this estimator was used as a baseline in Pconf classification \cite{Pconf}. We regard ERM using this risk estimator with Weighted in the following parts. Though this estimator is a natural idea of utilizing the confidences when the data is collected from all the classes, it is biased generally in our setting where only data of a single class or a subset of all the classes are available. We also offer the result of learning with fully-supervised data. The details of the used datasets and our experimental setups on them are shown in this section.
\vspace{-7pt}
\paragraph{Datasets:} We evaluate the performance of the proposed methods and baselines on \textit{Fasion-MNIST} dataset \cite{F} and \textit{CIFAR-10} dataset \cite{C}. In the experiments, we use data of a single class or a subset of all the classes to train a classifier that can conduct classification successfully on all the classes. The detailed information of the used datasets are listed in the supplementary materials.
\vspace{-7pt}
\paragraph{Setup:} We briefly introduce the setting of our experiments and the detailed statistics are shown in the supplementary materials. We train the proposed methods and baseline methods with 3-layer multi-layer perceptron and DenseNet-161 \cite{densenet} with softmax cross-entropy loss on Fashion-MNIST and CIFAR-10, respectively. Adam \cite{adam} is used as the optimization algorithm. The validation accuracy of the used methods are calculated according to their empirical risk estimators on a validation set consisted of SC/Sub-Conf data. Then we do not have to collect additional labeled data for validation. 


Though we ask annotators for the values of confidences in the real-world application, we simulate the confidences here by a probabilistic model. For generating the confidences, we use the separated labeled dataset for training a probabilistic model with the same loss function as in the last paragraph and this dataset is separated from any other process of experiments. Since the minimizer of cross-entropy loss is a good estimator of the class-posterior probability \cite{BComp, Deterministic_Partial, Consistent_Partial}, we use the output of the training and validation sets after a softmax layer as their confidences. 


As to the noisy confidences, we consider the scenario in Remark \ref{r1}. Since most of the elements are zero in this scenario, the SC/Sub-Conf methods are not applicable here since the denominators in (\ref{USC}) and (\ref{A}) may be zeros. We conduct experiments on the proposed NoRSC-Conf method and the baseline weighted method to show the robustness of NoRSC-Conf against extreme noise.

\begin{table}[t]
\caption{Mean and standard deviation of the classification accuracy over 10 trials for the Fashion-MNIST dataset. The proposed methods were compared with the baseline Weighted method and fully-supervised method, with different classes used for training. The best and equivalent methods are shown in bold based on the 5\% t-test, excluding fully-supervised method.}
    \label{T1}
    \centering
\resizebox{0.95\textwidth}{!}{
    \begin{tabular}{c|c|cc|c|c}
    \toprule
    \multicolumn{2}{c|}{Used Classes}&SC/Sub-Conf& NoRSC-Conf&Weighted&Supervised\\
    \midrule
    \multirow{2}*{Coat}&Accurate&\textbf{54.66$\pm$2.17}&49.72$\pm$2.35&49.63$\pm$1.82&\multirow{14}*{\centering 80.13$\pm$2.75}\\
    
    \cmidrule{2-5} &Noisy&--$\pm$--&\textbf{49.64$\pm$3.06}&\textbf{49.29$\pm$3.87}\\
    \cmidrule{1-5}
    \multirow{2}*{Sandal}&Accurate&\textbf{56.02$\pm$4.55}&45.65$\pm$6.42&44.59$\pm$7.98\\
    
    \cmidrule{2-5} &Noisy&--$\pm$--&\textbf{44.50$\pm$5.19}&40.28$\pm$4.85\\
    \cmidrule{1-5}
    \multirow{2}*{Shirt}&Accurate& \textbf{71.44$\pm$4.77}&60.26$\pm$7.45&57.90$\pm$7.46\\
    
    \cmidrule{2-5} &Noisy&--$\pm$--&\textbf{59.16$\pm$4.51}&54.23$\pm$4.33\\
    \cmidrule{1-5}
    \multirow{2}*{Bag}&Accurate&\textbf{71.29$\pm$2.99}&66.04$\pm$2.03&68.31$\pm$1.56\\
    
    \cmidrule{2-5} &Noisy&--$\pm$--&\textbf{63.19$\pm$4.24}&60.67$\pm$2.70\\
    \cmidrule{1-5}
    \multirow{2}*{Trouser \& Pullover}&Accurate&\textbf{62.05$\pm$2.92}&54.25$\pm$4.17&53.86$\pm$4.68\\
    
    \cmidrule{2-5} &Noisy&--$\pm$--&\textbf{53.74$\pm$2.73}&50.56$\pm$3.87\\
    \cmidrule{1-5}
    \multirow{2}*{Dress \& Sneaker \&Ankle Boot}&Accurate&\textbf{76.96$\pm$2.03}&74.64$\pm$2.38&73.06$\pm$2.43\\
    
    \cmidrule{2-5} &Noisy&--$\pm$--&\textbf{72.14$\pm$2.57}&\textbf{71.42$\pm$3.23}\\
    \bottomrule
    \end{tabular}
}
\vspace{-14pt}
\end{table}
\begin{table}[!htbp]
\caption{Mean and standard deviation of the classification accuracy over 5 trials for the CIFAR-10 dataset. The proposed methods were compared with the baseline Weighted method and fully-supervised method, with different classes used for training. The best and equivalent methods are shown in bold based on the 5\% t-test, excluding fully-supervised method.}
    \label{TB2}
    \centering
\resizebox{0.95\textwidth}{!}{
    \begin{tabular}{c|c|cc|c|c}
    \toprule
    \multicolumn{2}{c|}{Used Classes}&SC/Sub-Conf& NoRSC-Conf&Weighted&Supervised\\
    \midrule
    \multirow{2}*{Dog}&Accurate&\textbf{53.10$\pm$0.93}&51.52$\pm$0.76&51.60$\pm$0.98&\multirow{12}*{\centering 65.62$\pm$0.84}\\
    
    \cmidrule{2-5} &Noisy&--$\pm$--&\textbf{41.77$\pm$0.70}&39.94$\pm$0.74\\
    \cmidrule{1-5}
    \multirow{2}*{Airplane \& Dog}&Accurate& \textbf{56.73$\pm$0.42}&55.69$\pm$0.68&54.47$\pm$1.21\\
    
    \cmidrule{2-5} &Noisy&--$\pm$--&\textbf{46.47$\pm$0.59}&\textbf{45.42$\pm$0.83}\\
    \cmidrule{1-5}
    \multirow{2}*{Airplane \& Deer \& Ship}&Accurate&\textbf{57.12$\pm$1.63}&53.69$\pm$1.70&54.68$\pm$0.64\\
    
    \cmidrule{2-5} &Noisy&--$\pm$--&\textbf{51.11$\pm$0.65}&48.96$\pm$1.00\\
    \cmidrule{1-5}
    \multirow{2}*{\shortstack{Airplane \& Cat \& Frog\\ \& Truck}}&Accurate&\textbf{60.11$\pm$0.27}&58.95$\pm$0.61&57.47$\pm$1.26\\
    
    \cmidrule{2-5} &Noisy&--$\pm$--&\textbf{54.06$\pm$0.47}&52.56$\pm$0.85\\
    \cmidrule{1-5}
    \multirow{2}*{\shortstack{Airplane \& Cat \& Frog\\ Ship \& Truck}}&Accurate&\textbf{63.25$\pm$1.53}&\textbf{61.19$\pm$0.95}&59.89$\pm$1.43\\
    
    \cmidrule{2-5} &Noisy&--$\pm$--&\textbf{58.95$\pm$1.05}&54.42$\pm$2.38\\
    \bottomrule
    \end{tabular}
}
\vspace{-14pt}
\end{table}
\vspace{-7pt}
\paragraph{Results:} The experimental results are shown in Table \ref{T1} and \ref{TB2}. As we can see, when accurate confidences are used, the SC/Sub-Conf outperform NoRSC-Conf and Weighted in almost all the cases. When the size of the subset of all the classes increasing, the learning result of SC/Sub-Conf is even comparable to that of fully-supervised learning. Notice that since we only have data from a single class or a subset of all the classes, the training samples used for SC/Sub-Conf learning are far less than the whole training set. For a fair comparison, the number of training samples used for fully-supervised learning is reduced to that of the subset with the most samples.

When only the noisy confidences are given, NoRSC-Conf outperforms the weighted classification baseline in eight cases and is comparable to it in three cases. This result shows that NoRSC-Conf can alleviate the effect of noisy supervision and identify the Bayes-optimal classifier, which aligns with our theoretical analysis in Section \ref{S4}.
\vspace{-7pt}
\section{Conclusion}
\vspace{-7pt}
In this paper, we propose a novel weakly supervised learning setting and effective algorithms for provably consistent multi-class classification from data of a single class or a subset of all the classes equipped with confidences. We make three key contributions in this paper. Firstly, we propose an unbiased risk estimator for multi-class classification from data of a single class with confidences and provided the estimation error analysis. Secondly, we theoretically show that the proposed method can be robust to extreme noise and converge to the Bayes-optimal classifier in probability, and both infinite and finite-sample analyses on the misclassification rate are given. Finally, we extend our method to the case where data from a subset of all the classes are available. The experimental results demonstrate the usefulness of our algorithms.
\paragraph{Acknowledgments} This work was supported by the National Natural Science Foundation of China (Nos. 12071475, 11671010), Beijing Natural Science Foundation (No.4172035). GN was supported by JST AIP Acceleration Research Grant Number JPMJCR20U3, Japan. MS was supported by JST CREST Grant Number JPMJCR18A2, Japan.


\newpage
\appendix
\section{Proof of Theorem \ref{URE-SC}}
\begin{proof}
\begin{align*}
\pi_{y_{s}}\mathbb{E}_{p(\bm{x}|y_{s})}\left[\sum_{y=1}^{K}\frac{r^{y}(\bm{x})}{r^{y_{s}}(\bm{x})}\ell(\bm{g}(\bm{x}),y)\right]&=\pi_{y_{s}}\int\sum_{y=1}^{K}\frac{p(y|\bm{x})}{p(y_{s}|\bm{x})}\ell(\bm{g}(\bm{x}),y)p(\bm{x}|y_{s})d\bm{x}\\
&=\int\sum_{y=1}^{K}\frac{p(\bm{x},y_{s})}{p(y_{s}|\bm{x})}\ell(\bm{g}(\bm{x}),y)p(y|\bm{x})d\bm{x}\\
&=\int\sum_{y=1}^{K}p(\bm{x})\ell(\bm{g}(\bm{x}),y)p(y|\bm{x})d\bm{x}\\
&=\int\sum_{y=1}^{K}\ell(\bm{g}(\bm{x}),y)p(\bm{x},y)d\bm{x}=R(\bm{g}).
\end{align*}
\end{proof}
\section{Proof of Theorem \ref{Bound}}
To prove the Theorem \ref{Bound}, we first give the definition of Rademacher complexity:
\begin{definition}(Rademacher complexity) Let $Z_{1},\cdots,Z_{n}$ be n \textit{i.i.d.} random variables drawn from a probability distribution $\mu$ and $\mathcal{F}=\{f : Z \rightarrow \mathbb{R}\}$ be a class of measurable functions. Then the expected Rademacher complexity of function class $\mathcal{F}$ is given by:
\begin{align}
\mathfrak{R}_{n}(\mathcal{F})=\mathbb{E}_{Z_{1},\cdots,Z_{n}\sim\mu}\mathbb{E}_{\bm{\sigma}}\left[{\rm sup}_{f\in\mathcal{F}}\frac{1}{n}\sum_{i=1}^{n}\sigma_{i}f(Z_{i})\right],
\end{align}
where $\sigma_{1},\cdots,\sigma_{n}$ are the Rademacher variables that take the value from $\{-1, +1\}$ evenly. 
\end{definition}
Based on the setting and assumptions in Section \ref{sbound}, we have the following lemma for bounding the uniform convergence:
\begin{lemma} For any $0<\delta<1$, the following inequality holds with  probability at least $1-\delta$:
\begin{align}
{\rm sup_{\bm{g}\in{\bm{\mathcal{G}}}}}\left|\hat{R}_{SC}(\bm{g})-R(\bm{g})\right|\leq \frac{2\sqrt{2}\pi_{y_{s}}L_{\ell}}{C_{r}}\sum_{y=1}^{K}\mathfrak{R}_{n}(\mathcal{G}_{y})+\frac{\pi_{y_{s}}C_{\ell}}{C_{r}}\sqrt{\frac{\log\frac{2}{\delta}}{2n}},
\end{align}
where $\mathfrak{R}_{n}(\mathcal{G}_{y})$ is the Rademacher complexity of $\mathcal{G}_{y}$ on \textit{i.i.d.} samples with size $n$ drawn from a single class $p(\bm{x}|y_{s})$.
\end{lemma}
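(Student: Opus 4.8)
The plan is to follow the standard route for uniform-deviation bounds, namely the bounded-difference (McDiarmid) inequality followed by symmetrization and a contraction argument, with the extra care devoted to the importance-weighting factor $r^{y}(\bm{x})/r^{y_{\mathrm{s}}}(\bm{x})$. Write $\Phi(\bm{x}_{1},\dots,\bm{x}_{n})={\rm sup}_{\bm{g}\in\bm{\mathcal{G}}}|\hat{R}_{SC}(\bm{g})-R(\bm{g})|$ and recall from Theorem~\ref{URE-SC} that $R(\bm{g})$ is exactly the population counterpart of $\hat{R}_{SC}(\bm{g})$, so $\Phi$ is a uniform deviation between an empirical mean and its expectation over the $n$ i.i.d.\ samples drawn from $p(\bm{x}|y_{\mathrm{s}})$.

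First I would establish the bounded-difference property. The key observation is that although a single weight $r^{y}_{i}/r^{y_{\mathrm{s}}}_{i}$ can be large, the per-sample contribution to $\hat{R}_{SC}$ is uniformly controlled: since $\sum_{y=1}^{K}r^{y}_{i}=1$, $r^{y_{\mathrm{s}}}_{i}>C_{r}$ almost surely, and $\ell\leq C_{\ell}$, each summand satisfies $\frac{\pi_{y_{\mathrm{s}}}}{n}\sum_{y=1}^{K}\frac{r^{y}_{i}}{r^{y_{\mathrm{s}}}_{i}}\ell(\bm{g}(\bm{x}_{i}),y)\leq\frac{\pi_{y_{\mathrm{s}}}C_{\ell}}{nC_{r}}$. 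Hence replacing one $\bm{x}_{i}$ changes $\Phi$ by at most $\frac{\pi_{y_{\mathrm{s}}}C_{\ell}}{nC_{r}}$, and McDiarmid's inequality \cite{MC} gives, with probability at least $1-\delta$, $\Phi\leq\mathbb{E}[\Phi]+\frac{\pi_{y_{\mathrm{s}}}C_{\ell}}{C_{r}}\sqrt{\frac{\log(2/\delta)}{2n}}$, which is precisely the second term of the claimed bound.

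It then remains to control $\mathbb{E}[\Phi]$ by the Rademacher term. I would apply the standard symmetrization inequality \cite{FML} to pass to the Rademacher complexity of the composite class $\{\bm{x}\mapsto\pi_{y_{\mathrm{s}}}\sum_{y}\frac{r^{y}(\bm{x})}{r^{y_{\mathrm{s}}}(\bm{x})}\ell(\bm{g}(\bm{x}),y):\bm{g}\in\bm{\mathcal{G}}\}$, picking up a factor $2$. To peel off the loss and reduce to the per-coordinate complexities $\mathfrak{R}_{n}(\mathcal{G}_{y})$, I would invoke Maurer's vector-contraction inequality \cite{tala}. The essential point is that for each fixed $\bm{x}$ the map $\bm{v}\mapsto\sum_{y}\frac{r^{y}(\bm{x})}{r^{y_{\mathrm{s}}}(\bm{x})}\ell(\bm{v},y)$ is Lipschitz with constant at most $\frac{L_{\ell}}{C_{r}}$, since the individual losses are $L_{\ell}$-Lipschitz and the weights sum to $\sum_{y}r^{y}(\bm{x})/r^{y_{\mathrm{s}}}(\bm{x})=1/r^{y_{\mathrm{s}}}(\bm{x})\leq 1/C_{r}$. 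Vector contraction then supplies the $\sqrt{2}$ factor, and the product structure of $\bm{\mathcal{G}}=[\mathcal{G}_{y}]_{y=1}^{K}$ yields the decomposition into $\sum_{y=1}^{K}\mathfrak{R}_{n}(\mathcal{G}_{y})$, so that $\mathbb{E}[\Phi]\leq\frac{2\sqrt{2}\pi_{y_{\mathrm{s}}}L_{\ell}}{C_{r}}\sum_{y}\mathfrak{R}_{n}(\mathcal{G}_{y})$. Combining the two displays gives the lemma.

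The main obstacle is the treatment of the unbounded importance weights $r^{y}/r^{y_{\mathrm{s}}}$: bounding each weight individually by $1/C_{r}$ would introduce a superfluous factor of $K$, so obtaining the stated constants hinges on exploiting $\sum_{y}r^{y}=1$ in \emph{both} the bounded-difference step and the Lipschitz constant feeding the vector-contraction inequality. I would also verify the norm convention under which each $\ell(\cdot,y)$ is $L_{\ell}$-Lipschitz, to ensure Maurer's inequality applies with exactly the asserted $\sqrt{2}L_{\ell}/C_{r}$ constant.
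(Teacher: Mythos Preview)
Your proposal is correct and follows essentially the same route as the paper: McDiarmid's inequality with bounded-difference constant $\pi_{y_{\mathrm{s}}}C_{\ell}/(nC_{r})$, symmetrization, and Maurer's vector-contraction lemma using the Lipschitz constant $L_{\ell}/C_{r}$ obtained from $\sum_{y}r^{y}=1$. The only cosmetic difference is that the paper bounds each one-sided deviation $\sup_{\bm{g}}(\hat{R}_{SC}(\bm{g})-R(\bm{g}))$ and $\sup_{\bm{g}}(R(\bm{g})-\hat{R}_{SC}(\bm{g}))$ separately with probability $1-\delta/2$ and then takes a union bound (which is where the $\log(2/\delta)$ actually comes from), whereas you apply McDiarmid directly to the two-sided quantity $\Phi$; both routes yield the stated constants.
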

\begin{proof}
We begin with proving that the one direction $\sup_{\bm{g}\in\bm{\mathcal{G}}}\hat{R}_{SC}(\bm{g})-R(\bm{g})$ is bounded with probability at least $1-\frac{\delta}{2}$. Suppose an instance $\bm{x}_{i}$ is changed by $\bm{x}_{i}'$, we can see that then change of $\sup_{\bm{g}\in\bm{\mathcal{G}}}\hat{R}_{SC}(\bm{g})-R(\bm{g})$ is no greater than $\pi_{y_{s}}C_{\ell}/nC_{r}$ . By applying the McDiarmid's inequality \cite{MC}, with probability at least $1-\frac{\delta}{2}$, the following inequality holds:
\begin{align*}
\sup_{\bm{g}\in\bm{\mathcal{G}}}\hat{R}_{SC}(\bm{g})-R(\bm{g})\leq \mathbb{E}_{\bm{x}_{1},\cdots,\bm{x}_{n}}\left[\sup_{\bm{g}\in\bm{\mathcal{G}}}\hat{R}_{SC}(\bm{g})-R(\bm{g})\right]+\frac{\pi_{y_{s}}C_{\ell}}{C_{r}}\sqrt{\frac{\log\frac{2}{\delta}}{2n}}.
\end{align*}
Denote by $\mathcal{L}(\bm{g}(\bm{x}))=\sum_{y=1}^{K}\frac{r^{y}}{r^{y_{s}}}*\ell(\bm{g}(\bm{x}),y)$. It is easy to show that $\mathcal{L}(\bm{g}(\bm{x}))$ is  $\frac{L_{\ell}}{C_{r}}$-Lipschitz \textit{w.r.t.} $\bm{g}(\bm{x})$ due to the fact that $\sum_{y=1}^{K}r^{y}=1$ and $r^{y_{s}}\geq C_{r}$. Since $\hat{R}_{SC}(\bm{g})$ is unbiased, it is routine to show that \cite{FML}:
\begin{align*}
\mathbb{E}_{\bm{x}_{1},\cdots,\bm{x}_{n}}\left[\sup_{\bm{g}\in\bm{\mathcal{G}}}\hat{R}_{SC}(\bm{g})-R(\bm{g})\right]&\leq 2\pi_{y_{s}}\mathfrak{R}_{n}\left(\mathcal{L}\circ\bm{\mathcal{G}}\right)\\
&\leq \frac{2\sqrt{2}\pi_{y_{s}}L_{\ell}}{C_{r}}\sum_{y=1}^{K}\mathfrak{R}_{n}(\mathcal{G}_{y}),
\end{align*}
The last inequality holds according to the Talagrand's contraction inequality \cite{tala}.

The proof of the other direction $\sup_{\bm{g}\in\bm{\mathcal{G}}}R(\bm{g})-\hat{R}_{SC}(\bm{g})$ is similar. Thus we conclude the proof.
\end{proof}
Then we can begin to prove the Theorem \ref{Bound}:
\begin{proof}
\begin{align*}
R(\hat{\bm{g}}_{SC})-R(\bm{g}^{*})&=\left(R(\hat{\bm{g}}_{SC})-\hat{R}_{SC}(\hat{\bm{g}}_{SC})\right)+\left(\hat{R}_{SC}(\hat{\bm{g}}_{SC})-\hat{R}_{SC}(\bm{g}^{*})\right)+\left(\hat{R}_{SC}(\bm{g}^{*})-R(\bm{g}^{*})\right)\\
&\leq \left(R(\hat{\bm{g}}_{SC})-\hat{R}_{SC}(\hat{\bm{g}}_{SC})\right)+\left(\hat{R}_{SC}(\bm{g}^{*})-R(\bm{g}^{*})\right)\\
&\leq 2\sup_{\bm{g}\in\bm{\mathcal{G}}}\left|R(\hat{\bm{g}})-\hat{R}_{SC}(\hat{\bm{g}})\right|,
\end{align*}
where the second inequality holds according to the definition of ERM. Combining this conclusion with the previous lemma, we can conclude the proof.
\end{proof}
\section{Proof of Theorem \ref{T2}}
\begin{proof}
First of all, we give the definition of classification-calibrated loss:
\begin{proposition}
\label{P1}{\rm \cite{MCC}}
If the classification-calibrated losses are used, the optimal classifier $\bm{g}^{*}$ that minimizes the classification risk among all the measurable functions is also Bayes-optimal classifier: the classifier that minimizes the classification risk \textit{w.r.t.} 0-1 loss, i.e., misclassification rate. In other words, the prediction of the classifier on $\bm{x}$ satisfies this condition: $\mathop{\rm argmax}_{y\in\mathcal{Y}}g_{y}(\bm{x})=\mathop{\rm argmax}_{y\in\mathcal{Y}}p(y|\bm{x})$..
\end{proposition}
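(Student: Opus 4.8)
The plan is to prove the claim by reducing the minimization of the surrogate risk to a pointwise problem and then invoking the defining property of classification-calibration. First I would rewrite
$$R(\bm{g})=\mathbb{E}_{p(\bm{x})}\Big[\sum_{y=1}^{K}p(y|\bm{x})\,\ell(\bm{g}(\bm{x}),y)\Big].$$
Because $\bm{g}$ ranges over \emph{all} measurable functions $\mathcal{X}\to\mathbb{R}^{K}$, the vector $\bm{g}(\bm{x})$ may be chosen independently at each point, so minimizing $R$ amounts to minimizing, for $p(\bm{x})$-almost every $\bm{x}$, the conditional risk $C_{\bm{p}}(\bm{v}):=\sum_{y=1}^{K}p_{y}\,\ell(\bm{v},y)$ over $\bm{v}\in\mathbb{R}^{K}$, where $\bm{p}=(p(1|\bm{x}),\dots,p(K|\bm{x}))$ lies in the probability simplex. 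Hence $\bm{g}^{*}(\bm{x})\in\mathop{\rm argmin}_{\bm{v}}C_{\bm{p}}(\bm{v})$ almost surely.

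Next I would recall the definition of classification-calibration from \cite{MCC}: a loss $\ell$ is calibrated exactly when, for every $\bm{p}$ in the simplex, constraining the top-scoring coordinate of $\bm{v}$ to avoid the Bayes-label set $\mathop{\rm argmax}_{y}p_{y}$ makes the conditional risk strictly exceed the unconstrained infimum $\inf_{\bm{v}}C_{\bm{p}}(\bm{v})$. The key step is to deduce that every pointwise minimizer $\bm{v}^{*}=\bm{g}^{*}(\bm{x})$ satisfies $\mathop{\rm argmax}_{y}v^{*}_{y}\subseteq\mathop{\rm argmax}_{y}p_{y}$: otherwise $\bm{v}^{*}$ would lie in the constrained region while attaining the unconstrained infimum, contradicting the strict inequality. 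This yields the pointwise identity $\mathop{\rm argmax}_{y}g^{*}_{y}(\bm{x})=\mathop{\rm argmax}_{y}p(y|\bm{x})$ for almost every $\bm{x}$.

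Finally I would close the loop with the misclassification rate $R_{01}$. The Bayes-optimal rule for the 0-1 loss predicts $\mathop{\rm argmax}_{y}p(y|\bm{x})$ at each $\bm{x}$ and attains the smallest possible $R_{01}$ over all measurable functions; the pointwise identity above shows $\bm{g}^{*}$ makes the same prediction almost everywhere, so $\bm{g}^{*}$ also attains this minimal misclassification rate and is therefore Bayes-optimal.

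The main obstacle is making the global-to-pointwise reduction rigorous. This needs a measurable-selection argument so that a minimizer can be chosen as a measurable function of $\bm{x}$, together with care in two degenerate cases: the infimum of $C_{\bm{p}}$ need not be attained, in which case one argues with $\epsilon$-minimizers and passes to the limit, and the posterior may have ties, in which case $\mathop{\rm argmax}$ must be treated as a set throughout. These are precisely the technical points handled by \cite{MCC}; for the calibrated losses named in the statement (softmax cross-entropy and mean squared error) the conditional minimizer is unique and available in closed form, so the degenerate cases do not arise and the argument simplifies.
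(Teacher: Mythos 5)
Your proposal is correct in substance, but be aware that the paper itself offers no proof of this proposition: it appears inside the proof of Theorem \ref{T2} (Appendix C) purely as an imported result, cited to \cite{MCC} and used as a black box. What you have done is reconstruct the standard argument behind that citation, and the reconstruction is sound: rewriting $R(\bm{g})=\mathbb{E}_{p(\bm{x})}\left[\sum_{y=1}^{K}p(y|\bm{x})\ell(\bm{g}(\bm{x}),y)\right]$, reducing to pointwise minimization of the conditional risk because $\bm{g}$ ranges over all measurable functions, and invoking the calibration property (the infimum over score vectors whose top coordinate avoids the Bayes set strictly exceeds the unconstrained infimum) is exactly the mechanism underlying \cite{MCC}, and your list of caveats (measurable selection, non-attained infima handled by $\epsilon$-minimizers, ties treated as sets) matches the technical points that reference resolves. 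Two small refinements. First, your argument as written yields $\mathop{\rm argmax}_{y}g^{*}_{y}(\bm{x})\subseteq\mathop{\rm argmax}_{y}p(y|\bm{x})$, not set equality; for a general calibrated loss containment is all that calibration guarantees, and it is also all that the downstream Bayes-optimality claim needs, so the equality asserted in the proposition is really a feature of the specific losses the paper names (for softmax cross-entropy the pointwise minimizer satisfies $\mathrm{softmax}(\bm{v}^{*})=\bm{p}$, and for one-hot squared error $\bm{v}^{*}=\bm{p}$, whence equality) — a distinction you implicitly make but could state explicitly, since the paper's wording glosses over it. Second, the global-to-pointwise step needs its converse half spelled out: once a measurable pointwise minimizer exists, the minimal risk equals the integral of the pointwise infima, so any global minimizer must attain the pointwise infimum $p(\bm{x})$-almost everywhere (otherwise its risk would be strictly larger on a set of positive measure); you assert this but it deserves the one-line argument. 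Neither point is a gap — your write-up is more informative than the paper's treatment, which buys brevity by deferring all rigor to the citation.
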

Denote by $\tilde{p}(\bm{x},y)=\tilde{p}(y|\bm{x})p(\bm{x})$. Notice that the following holds:
\begin{align*}
\int\sum_{y=1}^{K}\tilde{p}(\bm{x},y)d\bm{x}&=\int\sum_{y=1}^{K}\tilde{p}(y|\bm{x})p(\bm{x})d\bm{x}\\
&=\int\left(\sum_{y=1}^{K}\tilde{p}(y|\bm{x})\right)p(\bm{x})d\bm{x}\\
&=\int p(\bm{x})d\bm{x}=1
\end{align*}
Then we can see that $\tilde{p}(\bm{x},y)$ is the density of a valid joint distribution on $\mathcal{X}\times\mathcal{Y}$. Then we have:
\begin{align*}
\tilde{R}(\bm{g})&=\mathbb{E}_{p(\bm{x}|y_{s})}\left[\phi(\bm{x})\sum_{y=1}^{K}\tilde{r}^{y}(\bm{x})\ell(\bm{g}(\bm{x}),y)\right]\\
&=\int p(\bm{x}|y_{s})\phi(\bm{x})\sum_{y=1}^{K}\tilde{r}^{y}(\bm{x})\ell(\bm{g}(\bm{x}),y)d\bm{x}\\
&=\int p(\bm{x})\sum_{y=1}^{K}\tilde{r}^{y}(\bm{x})\ell(\bm{g}(\bm{x}),y)d\bm{x}\\
&=\int\sum_{y=1}^{K}\tilde{p}(\bm{x},y)\ell(\bm{g}(\bm{x}),y)d\bm{x}\\
&=\mathbb{E}_{\tilde{p}(\bm{x},y)}\left[\ell(\bm{g}(\bm{x},y))\right]
\end{align*}
Let $\tilde{\bm{g}}^{*}={\rm argmin}_{\bm{g}\in\bm{\mathcal{G}}}\tilde{R}(\bm{g})$. According to the definition of the classification-calibrated loss, we can see that $\tilde{\bm{g}}^{bayes}=\mathop{\rm argmin}_{\bm{g}~{\rm measurable}}\tilde{R}(\bm{g})$ is Bayes-optimal. Since $\tilde{\bm{g}}^{bayes}$ is in $\mathcal{\bm{G}}$, we have that $\tilde{R}(\tilde{\bm{g}}^{bayes})\geq\tilde{R}(\tilde{\bm{g}}^{*})$. However, since $\tilde{R}(\tilde{\bm{g}}^{bayes})=\min_{\bm{g}~{\rm measurable}}\tilde{R}(\bm{g})$, we have that $\tilde{R}(\tilde{\bm{g}}^{bayes})\leq\tilde{R}(\tilde{\bm{g}}^{*})$. Combining the two conclusions, we know that $\tilde{R}(\tilde{\bm{g}}^{bayes})=\tilde{R}(\tilde{\bm{g}}^{*})$. Then we can see that $\tilde{\bm{g}}^{*}={\rm argmin}_{\bm{g}~{\rm measurable}}\tilde{R}(\bm{g})$, which indicates that $\tilde{\bm{g}}^{*}$ is also Bayes-optimal. 

According to the definition of the Bayes-optimal classifier, we have that ${\rm argmax}_{y\in\mathcal{Y}}\tilde{g}_{y}^{*}(\bm{x})={\rm argmax}_{y\in\mathcal{Y}}\tilde{p}(y|\bm{x})$. Since $\mathop{\rm argmax}_{y\in\mathcal{Y}}\tilde{p}(y|\bm{x})=\mathop{\rm argmax}_{y\in\mathcal{Y}}p(y|\bm{x})$, we have that ${\rm argmax}_{y\in\mathcal{Y}}\tilde{g}_{y}^{*}(\bm{x})={\rm argmax}_{y\in\mathcal{Y}}p(y|\bm{x})$. Then we can see that $\tilde{\bm{g}}^{*}$ is also the Bayes-optimal classifier of $R(\bm{g})$, which shows the classifier-consistency of $\tilde{R}(\bm{g})$.
\end{proof}
\section{Proof of Lemma \ref{lbound} and Theorem \ref{final}}
We begin with the proof of Lemma \ref{lbound}:
\begin{proof}
We prove this lemma by bounding the uniform convergence as in the proof of Theorem \ref{Bound}. First we prove the following technical lemma:
\begin{lemma}
For any $0<\delta<1$, the following inequality holds with  probability at least $1-\delta$:
\begin{align}
{\rm sup_{\bm{g}\in{\bm{\mathcal{G}}}}}\left|\hat{\tilde{{R}}}(\bm{g})-\tilde{R}(\bm{g})\right|\leq\frac{2\sqrt{2}L_{\ell}}{C_{r}}\sum_{y=1}^{K}\mathfrak{R}_{n}(\mathcal{G}_{y})+\frac{C_{\ell}}{C_{r}}\sqrt{\frac{\log\frac{2}{\delta}}{2n}},
\end{align}
where $\mathfrak{R}_{n}(\mathcal{G}_{y})$ is the Rademacher complexity of $\mathcal{G}_{y}$ on \textit{i.i.d.} samples with size $n$ drawn from a single class $p(\bm{x}|y_{s})$.
\end{lemma}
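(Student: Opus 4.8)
The plan is to mirror the uniform-convergence argument behind Theorem~\ref{Bound}, since $\hat{\tilde{R}}(\bm{g})$ in (\ref{ERM-SC}) is an unbiased estimator of $\tilde{R}(\bm{g})$ in (\ref{NSC}): taking $\mathbb{E}_{p(\bm{x}|y_s)}$ of each summand reproduces the integrand of $\tilde{R}$. I would first reduce the two-sided supremum to the two one-sided deviations $\sup_{\bm{g}\in\bm{\mathcal{G}}}[\hat{\tilde{R}}(\bm{g})-\tilde{R}(\bm{g})]$ and $\sup_{\bm{g}\in\bm{\mathcal{G}}}[\tilde{R}(\bm{g})-\hat{\tilde{R}}(\bm{g})]$, bound each with probability $1-\delta/2$, and combine them by a union bound; the two directions are symmetric, so it suffices to treat the first.

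The single new ingredient, compared with the SC-Conf lemma, is a uniform bound on the weight $\phi$. By Bayes' rule $\phi(\bm{x})=\frac{p(\bm{x})}{p(\bm{x}|y_s)}=\frac{\pi_{y_s}}{p(y_s|\bm{x})}$, and since $\pi_{y_s}\le 1$ and the standing assumption gives $p(y_s|\bm{x})>C_r$ almost surely, we obtain $\phi(\bm{x})<\frac{1}{C_r}$. This is precisely what replaces the explicit $\pi_{y_s}/C_r$ factor of the SC-Conf bound by the cleaner coefficient $1/C_r$, so that no $\pi_{y_s}$ appears in the target inequality.

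For the concentration step, replacing one example $\bm{x}_i$ changes its weighted loss $\phi(\bm{x}_i)\sum_{y=1}^{K}\tilde{r}_i^{y}\ell(\bm{g}(\bm{x}_i),y)$, and hence the supremum, by at most $\frac{1}{n}\cdot\frac{1}{C_r}\cdot 1\cdot C_\ell$, using $\phi\le 1/C_r$, $\sum_{y=1}^{K}\tilde{r}_i^{y}=1$ (as $\tilde{r}^y=\tilde{p}(y|\bm{x})$ is a conditional distribution), and $\ell\le C_\ell$. McDiarmid's inequality \cite{MC} then bounds the deviation of the supremum from its expectation by $\frac{C_\ell}{C_r}\sqrt{\frac{\log(2/\delta)}{2n}}$.

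For the expected supremum, standard symmetrization \cite{FML} gives $\mathbb{E}[\sup_{\bm{g}}(\hat{\tilde{R}}-\tilde{R})]\le 2\mathfrak{R}_n(\mathcal{L}\circ\bm{\mathcal{G}})$, where $\mathcal{L}(\bm{g}(\bm{x}))=\phi(\bm{x})\sum_{y=1}^{K}\tilde{r}^{y}(\bm{x})\ell(\bm{g}(\bm{x}),y)$; note that $\phi(\bm{x})$ depends only on $\bm{x}$ and so acts as a fixed scalar in the contraction. Since each $\ell(\cdot,y)$ is $L_\ell$-Lipschitz, $\sum_y\tilde{r}^y=1$, and $\phi\le 1/C_r$, the map $\mathcal{L}$ is $(L_\ell/C_r)$-Lipschitz in $\bm{g}(\bm{x})$, so the vector-contraction inequality \cite{tala} yields $\mathfrak{R}_n(\mathcal{L}\circ\bm{\mathcal{G}})\le \frac{\sqrt{2}L_\ell}{C_r}\sum_{y=1}^{K}\mathfrak{R}_n(\mathcal{G}_y)$. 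Adding the concentration and complexity terms, and repeating the identical argument for the reverse direction, concludes the proof. I do not expect a genuine obstacle here: the whole argument is a transcription of the SC-Conf lemma, the only care being to propagate the bound $\phi\le 1/C_r$ consistently into both the bounded-difference constant and the Lipschitz constant.
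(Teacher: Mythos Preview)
Your proposal is correct and follows essentially the same route as the paper: reduce to one-sided deviations, use $\phi(\bm{x})\le 1/C_r$ to obtain the bounded-difference constant $C_\ell/(nC_r)$ for McDiarmid, then symmetrize and apply the vector-contraction inequality to the $(L_\ell/C_r)$-Lipschitz map $\mathcal{L}(\bm{g}(\bm{x}))=\phi(\bm{x})\sum_y\tilde r^y(\bm{x})\ell(\bm{g}(\bm{x}),y)$. Your explicit derivation of $\phi=\pi_{y_s}/p(y_s|\bm{x})\le 1/C_r$ via Bayes' rule is the only added detail; otherwise the argument matches the paper's step for step.
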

\begin{proof}
We only prove that the direction $\sup_{\bm{g}\in\bm{\mathcal{G}}}\hat{\tilde{{R}}}(\bm{g})-\tilde{R}(\bm{g})$ is bounded with probability at least $1-\frac{\delta}{2}$ since the proof of another direction is completely symmetric. Suppose an instance $\bm{x}_{i}$ is changed by $\bm{x}_{i}'$, we can see that then change of $\sup_{\bm{g}\in\bm{\mathcal{G}}}\hat{\tilde{{R}}}(\bm{g})-\tilde{R}(\bm{g})$ is no greater than $C_{\ell}/nC_{r}$ since $\phi_{i}\leq 1/C_{r}$. By applying the McDiarmid's inequality \cite{MC}, with probability at least $1-\frac{\delta}{2}$, the following inequality holds:
\begin{align*}
\sup_{\bm{g}\in\bm{\mathcal{G}}}\hat{\tilde{{R}}}(\bm{g})-\tilde{R}(\bm{g})\leq\mathbb{E}_{\bm{x}_{1},\cdots,\bm{x}_{n}}\left[\sup_{\bm{g}\in\bm{\mathcal{G}}}\hat{\tilde{{R}}}(\bm{g})-\tilde{R}(\bm{g})\right]+\frac{C_{\ell}}{C_{r}}\sqrt{\frac{\log\frac{2}{\delta}}{2n}}.
\end{align*}
Denote by $\tilde{\mathcal{L}}(\bm{g}(\bm{x}))=\phi(\bm{x})\sum_{y=1}^{K}\tilde{r}^{y}*\ell(\bm{g}(\bm{x}),y)$. It is easy to show that $\mathcal{L}(\bm{g}(\bm{x}))$ is  $\frac{L_{\ell}}{C_{r}}$-Lipschitz \textit{w.r.t.} $\bm{g}(\bm{x})$ due to the fact that $\sum_{y=1}^{K}r^{y}=1$ and $\phi(\bm{x})\leq 1/C_{r}$. Then it is routine to show that \cite{FML}:
\begin{align*}
\mathbb{E}_{\bm{x}_{1},\cdots,\bm{x}_{n}}\left[\sup_{\bm{g}\in\bm{\mathcal{G}}}\hat{\tilde{{R}}}(\bm{g})-\tilde{R}(\bm{g})\right]&\leq 2\mathfrak{R}_{n}\left(\mathcal{L}\circ\bm{\mathcal{G}}\right)\\
&\leq \frac{2\sqrt{2}L_{\ell}}{C_{r}}\sum_{y=1}^{K}\mathfrak{R}_{n}(\mathcal{G}_{y}).
\end{align*}
The last inequality holds according to the Talagrand's contraction inequality \cite{tala}.

The proof of the other direction is similar. Thus we conclude the proof.
\end{proof}
Then we can begin to prove Lemma \ref{lbound} as in the proof of Theorem \ref{Bound}:
\begin{align*}
\tilde{R}(\hat{\tilde{\bm{g}}})-\tilde{R}(\tilde{\bm{g}}^{*})&=\left(\tilde{R}(\hat{\tilde{\bm{g}}})-\hat{\tilde{R}}(\hat{\tilde{\bm{g}}})\right)+\left(\hat{\tilde{R}}(\hat{\tilde{\bm{g}}})-\hat{\tilde{R}}(\tilde{\bm{g}}^{*})\right)+\left(\hat{\tilde{R}}(\tilde{\bm{g}}^{*})-\tilde{R}(\tilde{\bm{g}}^{*})\right)\\
&\leq \left(\tilde{R}(\hat{\tilde{\bm{g}}})-\hat{\tilde{R}}(\hat{\tilde{\bm{g}}})\right)+\left(\hat{\tilde{R}}(\tilde{\bm{g}}^{*})-\tilde{R}(\tilde{\bm{g}}^{*})\right)\\
&\leq 2\sup_{\bm{g}\in\bm{\mathcal{G}}}\left|\tilde{R}(\hat{\bm{g}})-\hat{\tilde{R}}(\hat{\bm{g}})\right|,
\end{align*}
where the second inequality holds according to the definition of ERM. Combining this conclusion with the previous lemma, we can conclude the proof.
\end{proof}
Then we prove the Theorem \ref{final}:
\begin{proof}
First we reformulate the expression of $\tilde{R}_{01}(\hat{\tilde{\bm{g}}})-\tilde{R}_{01}(\tilde{\bm{g}}^{*})$. 
According to the proof of Theorem \ref{T2}, $\tilde{\bm{g}}^{*}$ is the Bayes-optimal classifier of $\tilde{R}(\bm{g})$. Denote by $f(\bm{x})={\rm argmax}_{y\in\mathcal{Y}}\hat{\tilde{g}}_{y}(\bm{x})$ the decision function \textit{w.r.t.} $\hat{\tilde{\bm{g}}}$. Using the definition of Bayes-optimal classifier, we have the following equations:
\begin{align*}
\tilde{R}_{01}(\hat{\tilde{\bm{g}}})-\tilde{R}_{01}(\tilde{\bm{g}}^{*})&=\mathbb{E}_{\tilde{p}(\bm{x},y)}\left[1(f(\bm{x}))\not=y)\right]-\mathbb{E}_{\tilde{p}(\bm{x},y)}\left[1({\rm argmax}_{y\in\mathcal{Y}}\tilde{g}_{y}^{*}\not=y)\right]\\
&=\mathbb{E}_{p(\bm{x})}\left[\mathbb{E}_{\tilde{p}(y|\bm{x})}\left[1(f(\bm{x}))\not=y)-1({\rm argmax}_{y\in\mathcal{Y}}\tilde{g}_{y}^{*}\not=y)\right]\right]\\
&=\mathbb{E}_{p(\bm{x})}\left[1-\tilde{p}(f(\bm{x})|\bm{x})-\left(1-\max_{y\in\mathcal{Y}}\tilde{p}(y|\bm{x})\right)\right]\\
&=\mathbb{E}_{p(\bm{x})}\left[\max_{y\in\mathcal{Y}}\tilde{p}(y|\bm{x})-\tilde{p}(f(\bm{x})|\bm{x})\right]\\
&=\mathbb{E}_{p(\bm{x})}\left[\max_{y\in\mathcal{Y}}\tilde{p}(y|\bm{x})-\tilde{p}(f(\bm{x})|\bm{x})\right]\\
&=\mathbb{E}_{p(\bm{x})}\left[1(f(\bm{x})\not={\rm argmax}_{y\in\mathcal{Y}}\tilde{p}(y|\bm{x}))\left(\max_{y\in\mathcal{Y}}\tilde{p}(y|\bm{x})-\tilde{p}(f(\bm{x})|\bm{x})\right)\right].
\end{align*}
Since $\inf_{\bm{x}\in\mathcal{X}}\Delta(\bm{x})\leq \left(\max_{y\in\mathcal{Y}}\tilde{p}(y|\bm{x})-\tilde{p}(f(\bm{x})|\bm{x})\right)$ almost surely for all the $\bm{x}\in\mathcal{X}$, we have the following inequalities:
$$
\left\{ 
\begin{matrix}
1(f(\bm{x})\not={\rm argmax}_{y\in\mathcal{Y}}\tilde{p}(y|\bm{x}))\left(\max_{y\in\mathcal{Y}}\tilde{p}(y|\bm{x})-\tilde{p}(f(\bm{x})|\bm{x})\right)=1(f(\bm{x})\not={\rm argmax}_{y\in\mathcal{Y}}\tilde{p}(y|\bm{x}))*\inf_{\bm{x}\in\mathcal{X}}\Delta(\bm{x}), \\~~~~\quad\quad\quad\quad\quad\quad\quad\quad\quad\quad\quad\quad\quad\quad\quad\quad\quad\quad\quad\quad\quad\quad\quad\quad\quad\quad\quad\quad\quad\quad\quad\quad(f(\bm{x})={\rm argmax}_{y\in\mathcal{Y}}\tilde{p}(y|\bm{x})).\\
1(f(\bm{x})\not={\rm argmax}_{y\in\mathcal{Y}}\tilde{p}(y|\bm{x}))\left(\max_{y\in\mathcal{Y}}\tilde{p}(y|\bm{x})-\tilde{p}(f(\bm{x})|\bm{x})\right)>1(f(\bm{x})\not={\rm argmax}_{y\in\mathcal{Y}}\tilde{p}(y|\bm{x}))*\inf_{\bm{x}\in\mathcal{X}}\Delta(\bm{x}), \\~~~~\quad\quad\quad\quad\quad\quad\quad\quad\quad\quad\quad\quad\quad\quad\quad\quad\quad\quad\quad\quad\quad\quad\quad\quad\quad\quad\quad\quad\quad\quad\quad\quad(f(\bm{x})\not={\rm argmax}_{y\in\mathcal{Y}}\tilde{p}(y|\bm{x})).
\end{matrix}
\right.
$$
Then we have the following inequality:
$$1(f(\bm{x})\not={\rm argmax}_{y\in\mathcal{Y}}\tilde{p}(y|\bm{x}))\left(\max_{y\in\mathcal{Y}}\tilde{p}(y|\bm{x})-\tilde{p}(f(\bm{x})|\bm{x})\right)\geq1(f(\bm{x})\not={\rm argmax}_{y\in\mathcal{Y}}\tilde{p}(y|\bm{x}))*\inf_{\bm{x}\in\mathcal{X}}\Delta(\bm{x})$$
Then we can further give the lower bound of $\tilde{R}_{01}(\hat{\tilde{\bm{g}}})-\tilde{R}_{01}(\tilde{\bm{g}}^{*})$:
\begin{align*}
\tilde{R}_{01}(\hat{\tilde{\bm{g}}})-\tilde{R}_{01}(\tilde{\bm{g}}^{*})&=\mathbb{E}_{p(\bm{x})}\left[1(f(\bm{x})\not={\rm argmax}_{y\in\mathcal{Y}}\tilde{p}(y|\bm{x}))\left(\max_{y\in\mathcal{Y}}\tilde{p}(y|\bm{x})-\tilde{p}(f(\bm{x})|\bm{x})\right)\right]\\
&\geq\mathbb{E}_{p(\bm{x})}\left[1(f(\bm{x})\not={\rm argmax}_{y\in\mathcal{Y}}\tilde{p}(y|\bm{x}))*\inf_{\bm{x}\in\mathcal{X}}\Delta(\bm{x})\right]
\end{align*}
Since the factor $\inf_{\bm{x}\in\mathcal{X}}\Delta(\bm{x})$ is irrelevant to $\bm{x}$, we can further give the following conclusion:
\begin{align}\label{tech}
\mathbb{E}_{p(\bm{x})}\left[1(f(\bm{x})\not={\rm argmax}_{y\in\mathcal{Y}}\tilde{p}(y|\bm{x}))\right]\leq \frac{1}{\inf_{\bm{x}\in\mathcal{X}}\Delta(\bm{x})}\left(\tilde{R}_{01}(\hat{\tilde{\bm{g}}})-\tilde{R}_{01}(\tilde{\bm{g}}^{*})\right).
\end{align}
Since $\tilde{\bm{g}}^{*}\in\bm{\mathcal{G}}$ is also the Bayes-optimal classifier of $R(\bm{g})$ as stated in the proof of Theorem \ref{T2}, we can learn that $R_{01}(\tilde{\bm{g}}^{*})\leq R_{01}(\bm{g}_{01}^{*})$. Notice that $\tilde{\bm{g}}^{*}$ is in $\bm{\mathcal{G}}$, then $R_{01}(\tilde{\bm{g}}^{*})\geq R_{01}(\bm{g}_{01}^{*})$. Combining the two inequalities, we can see that $R_{01}(\tilde{\bm{g}}^{*})=R_{01}a(\bm{g}_{01}^{*})$. Then we can get the expression of $R_{01}(\hat{\tilde{\bm{g}}})-R_{01}(\bm{g}_{01}^{*})$:
\begin{align*}
R_{01}(\hat{\tilde{\bm{g}}})-R_{01}(\bm{g}_{01}^{*})&= R_{01}(\hat{\tilde{\bm{g}}})-R_{01}(\tilde{\bm{g}}^{*})\\
&=\mathbb{E}_{p(\bm{x},y)}\left[1(f(\bm{x}))\not=y)\right]-\mathbb{E}_{p(\bm{x},y)}\left[1({\rm argmax}_{y\in\mathcal{Y}}\tilde{g}_{y}^{*}\not=y)\right]\\
&=\mathbb{E}_{p(\bm{x})}\left[\mathbb{E}_{p(y|\bm{x})}\left[1(f(\bm{x}))\not=y)-1({\rm argmax}_{y\in\mathcal{Y}}\tilde{g}_{y}^{*}\not=y)\right]\right]\\
&=\mathbb{E}_{p(\bm{x})}\left[1-p(f(\bm{x})|\bm{x})-\left(1-\max_{y\in\mathcal{Y}}p(y|\bm{x})\right)\right]\\
&=\mathbb{E}_{p(\bm{x})}\left[\max_{y\in\mathcal{Y}}p(y|\bm{x})-p(f(\bm{x})|\bm{x})\right]\\
&=\mathbb{E}_{p(\bm{x})}\left[\max_{y\in\mathcal{Y}}p(y|\bm{x})-p(f(\bm{x})|\bm{x})\right]\\
&=\mathbb{E}_{p(\bm{x})}\left[1(f(\bm{x})\not={\rm argmax}_{y\in\mathcal{Y}}p(y|\bm{x}))\left(\max_{y\in\mathcal{Y}}p(y|\bm{x})-p(f(\bm{x})|\bm{x})\right)\right].
\end{align*}
Since ${\rm argmax}_{y\in\mathcal{Y}}p(y|\bm{x})={\rm argmax}_{y\in\mathcal{Y}}\tilde{p}(y|\bm{x})$ and $\left(\max_{y\in\mathcal{Y}}p(y|\bm{x})-p(f(\bm{x})|\bm{x})\right)\leq 1$, we can get the following inequality:
$$R_{01}(\hat{\tilde{\bm{g}}})-R_{01}(\bm{g}_{01}^{*})\leq \mathbb{E}_{p(\bm{x})}\left[1(f(\bm{x})\not={\rm argmax}_{y\in\mathcal{Y}}\tilde{p}(y|\bm{x}))\right].$$
Combining the inequality above, (\ref{tech}), and Lemma \ref{tool}, we can conclude the proof of Theorem \ref{final}. 
\end{proof}
\section{Proof of Theorem \ref{final_bound}}
\begin{proof}
$\hat{B}^{SC}_{\eta}(\hat{\phi})=\frac{1}{n}\sum_{i=1}^{n}\left(\nabla\eta(\hat{\phi}(\bm{x}_{i}))\hat{\phi}(\bm{x}_{i})-\eta(\hat{\phi}(\bm{x}_{i}))\right)$ and $\hat{B}^{U}_{\eta}(\hat{\phi})=-\frac{1}{n_{u}}\sum_{i=1}^{n_{u}}\nabla\eta(\hat{\phi}(\bm{x}_{i}^{u}))$. $\hat{B}^{SC}_{\eta}(\hat{\phi})$ and $\hat{B}^{U}_{\eta}(\hat{\phi})$ are their expectations, respectively. Then the following work is to bound the uniform convergence. Assume that there is $C_{\hat{\phi}}>0$ that ${\rm sup}_{\hat{\phi}\in \Phi}\|\hat{\phi}\|_{\infty}\leq C_{\hat{\phi}}$. The function $\eta(\hat{\phi}(\bm{x}))$ is Lipschitz continuous for all $\|\hat{\phi}\|_{\infty}\leq C_{\hat{\phi}}$ with Lipschitz constant $L_{\eta}>0$ and upper-bounded by $C_{\eta}>0$. $\nabla\eta$ is also Lipschitz continuous with constant $L'_{\eta}>0$. Then we first give the technical lemma:

\begin{lemma} For any $0<\delta<1$, With probability at least 1-$\delta$: 
\begin{align}
&\sup_{\hat{\phi}\in\Phi}|\hat{B}^{SC}_{\eta}(\hat{\phi})-B^{SC}_{\eta}(\hat{\phi})|\leq 2L_{B}\mathfrak{R}_{n}(\Phi)+C_{B}\sqrt{\frac{\log\frac{2}{\delta}}{2n}}.\\
&\sup_{\hat{\phi}\in\Phi}|\hat{B}^{U}_{\eta}(\hat{\phi})-B^{U}_{\eta}(\hat{\phi})|\leq 2L'_{\eta}\mathfrak{R}_{n_{u}}^{u}(\Phi)+L_{\eta}\sqrt{\frac{\log\frac{2}{\delta}}{2n_{u}}}.
\end{align}
\end{lemma}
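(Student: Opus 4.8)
The plan is to establish the two inequalities independently, in each case reusing the three-step argument already used for the lemma in Appendix B: (i) a bounded-difference (McDiarmid) inequality concentrating the supremum around its expectation, (ii) a symmetrization step rewriting that expectation as a Rademacher complexity of the relevant composite class, and (iii) a Talagrand contraction step that strips off the nonlinearity and leaves $\mathfrak{R}_{n}(\Phi)$ or $\mathfrak{R}^{u}_{n_{u}}(\Phi)$. For each inequality I would bound the two directions $\sup_{\hat\phi}\bigl(\hat B-B\bigr)$ and $\sup_{\hat\phi}\bigl(B-\hat B\bigr)$ separately, each with probability $1-\delta/2$, and take a union bound; this is what yields the $\log\frac{2}{\delta}$ appearing inside the deviation term.

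For the single-class part, set $h(t)=t\,\nabla\eta(t)-\eta(t)$, so that $\hat B^{SC}_{\eta}(\hat\phi)=\frac1n\sum_{i=1}^{n}h(\hat\phi(\bm x_i))$ with $B^{SC}_{\eta}$ its expectation under $p(\bm x\mid y_{\mathrm s})$. On the admissible range $\|\hat\phi\|_\infty\le C_{\hat\phi}$ one has $|\nabla\eta|\le L_\eta$ and $|\eta|\le C_\eta$, hence $|h|\le C_{\hat\phi}L_\eta+C_\eta=:C_B$, which controls the McDiarmid bounded differences and produces the $C_B\sqrt{\log\frac{2}{\delta}/2n}$ term. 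Symmetrization bounds the expected supremum by $2\mathfrak{R}_{n}(h\circ\Phi)$, and the remaining task is the contraction constant of $h$. The clean observation is the cancellation $h'(t)=\nabla\eta(t)+t\,\nabla^2\eta(t)-\nabla\eta(t)=t\,\nabla^2\eta(t)$, so that the Lipschitzness of $\nabla\eta$ (i.e.\ $|\nabla^2\eta|\le L'_\eta$) together with $|t|\le C_{\hat\phi}$ gives $|h'|\le C_{\hat\phi}L'_\eta=:L_B$; Talagrand's inequality then yields $2\mathfrak{R}_{n}(h\circ\Phi)\le 2L_B\mathfrak{R}_{n}(\Phi)$, which is the first inequality.

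The unlabeled part is structurally simpler, since $\hat B^{U}_{\eta}(\hat\phi)=\frac1{n_u}\sum_{i=1}^{n_u}\bigl(-\nabla\eta(\hat\phi(\bm x^u_i))\bigr)$ is a single Lipschitz transform $-\nabla\eta$ of $\hat\phi$: each summand has magnitude at most $L_\eta$, controlling the deviation term $L_\eta\sqrt{\log\frac{2}{\delta}/2n_u}$, and the $L'_\eta$-Lipschitzness of $\nabla\eta$ lets the contraction inequality reduce the composite complexity to $2L'_\eta\mathfrak{R}^{u}_{n_u}(\Phi)$, giving the second inequality. The step I expect to demand the most care is the contraction for the single-class term: $h(t)=t\,\nabla\eta(t)-\eta(t)$ superficially looks to need control of the product $t\,\nabla\eta(t)$ and of $\eta$, yet the cancellation makes its Lipschitz constant depend only on $C_{\hat\phi}$ and $L'_\eta$. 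Because the paper only assumes $\nabla\eta$ is Lipschitz rather than that $\eta$ is classically twice differentiable, I would justify $|h(s)-h(t)|\le C_{\hat\phi}L'_\eta|s-t|$ by integrating the almost-everywhere derivative $h'(u)=u\,\nabla^2\eta(u)$ (which exists a.e.\ by Lipschitzness of $\nabla\eta$, with $h$ absolutely continuous on the compact range $[-C_{\hat\phi},C_{\hat\phi}]$), rather than invoking a pointwise second derivative.
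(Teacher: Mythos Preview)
Your proposal is correct and follows essentially the same three-step route as the paper's proof: McDiarmid for the concentration, symmetrization to pass to a Rademacher complexity of the composite class, and Talagrand's contraction to strip off the scalar nonlinearity and land on $\mathfrak{R}_n(\Phi)$ (resp.\ $\mathfrak{R}^u_{n_u}(\Phi)$). The only substantive difference is in the constants you obtain for the single-class term: the paper bounds the Lipschitz constant of $h(t)=t\nabla\eta(t)-\eta(t)$ crudely as $L_B=L'_\eta C_{\hat\phi}+2L_\eta$ and takes $C_B=\max\{C_\eta,\,L_\eta C_{\hat\phi}\}$, whereas you exploit the cancellation $h'(t)=t\nabla^2\eta(t)$ to get the sharper $L_B=C_{\hat\phi}L'_\eta$, and bound $|h|$ by the sum $C_{\hat\phi}L_\eta+C_\eta$. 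Since the lemma is stated with $L_B,C_B$ as unspecified constants feeding into Theorem~\ref{final_bound}, either choice suffices; your cancellation argument is a nice refinement.
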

\begin{proof}
Notice that $\hat{B}^{SC}_{\eta}(\hat{\phi})$ is Lipschitz continuous with Lipschitz constant $L_{B}=L'_{\eta}C_{\hat{\eta}}+2L_{\eta}$ and its absolute value is bounded by $C_{B}=\max\{C_{\eta},~L_{\eta}C_{\hat{\phi}}\}$. By using the McDiarmid's inequality, Union bound, and the Talagrand's contraction inequality, we can learn that with probability at least 1-$\delta$:
\begin{align*}
\sup_{\hat{\phi}\in\Phi}|\hat{B}^{SC}_{\eta}(\hat{\phi})-B^{SC}_{\eta}(\hat{\phi})|\leq 2L_{B}\mathfrak{R}_{n}(\Phi)+C_{B}\sqrt{\frac{\log\frac{2}{\delta}}{2n}}.
\end{align*}

Notice that $\hat{B}^{U}_{\eta}(\hat{\phi})$ is $L'_{\eta}$-Lipschitz continuous and is bounded by $L_{\eta}$. In the same way, we can see that with probability at least 1-$\delta$:
\begin{align*}
\sup_{\hat{\phi}\in\Phi}|\hat{B}^{U}_{\eta}(\hat{\phi})-B^{U}_{\eta}(\hat{\phi})|\leq 2L'_{\eta}\mathfrak{R}_{n_{u}}^{u}(\Phi)+L_{\eta}\sqrt{\frac{\log\frac{2}{\delta}}{2n_{u}}}.
\end{align*}

\end{proof}
Then we can begin to prove the Theorem \ref{final_bound}:
\begin{align*}
    B_{\eta}(\phi\|\hat{\phi}^{*})&=B_{\eta}(\phi\|\hat{\phi}^{*})-B_{\eta}(\phi\|\phi)\\
    &=B_{\eta}(\phi\|\hat{\phi}^{*})-\hat{B}_{\eta}(\phi\|\hat{\phi}^{*})+\hat{B}_{\eta}(\phi\|\hat{\phi}^{*})-\hat{B}_{\eta}(\phi\|\phi)+\hat{B}_{\eta}(\phi\|\phi)-B_{\eta}(\phi\|\phi)\\
    &\leq 2\sup_{\hat{\phi}\in\Phi}|B_{\eta}(\phi\|\hat{\phi})-\hat{B}_{\eta}(\phi\|\hat{\phi})|\\
    &\leq 2\sup_{\hat{\phi}\in\Phi}\left(|\hat{B}^{SC}_{\eta}(\hat{\phi})-B^{SC}_{\eta}(\hat{\phi})|+|\hat{B}^{U}_{\eta}(\hat{\phi})-B^{U}_{\eta}(\hat{\phi})|\right)\\
    &\leq 2\sup_{\hat{\phi}\in\Phi}|\hat{B}^{SC}_{\eta}(\hat{\phi})-B^{SC}_{\eta}(\hat{\phi})|+2\sup_{\hat{\phi}\in\Phi}|\hat{B}^{U}_{\eta}(\hat{\phi})-B^{U}_{\eta}(\hat{\phi})|.
\end{align*}
According to the technical lemma above and the union bound, we can show that with probability at least $1-\delta$:
\begin{align*}
B_{\eta}(\phi\|\hat{\phi}^{*})\leq 4L_{B}\mathfrak{R}_{n}(\Phi)+2C_{B}\sqrt{\frac{\log\frac{4}{\delta}}{2n}}+4L'_{\eta}\mathfrak{R}_{n_{u}}^{u}(\Phi)+2L_{\eta}\sqrt{\frac{\log\frac{4}{\delta}}{2n_{u}}}.
\end{align*}
which concludes the proof.
\end{proof}
\section{Proof of Theorem \ref{TF} and the Formulation of Noise-Robust Sub-Conf Learning}
\begin{proof}
\begin{align*}
&\left(\sum_{y\in\mathcal{Y}_{s}}\pi_{y}\right)\mathbb{E}_{p(\bm{x}|y\in\mathcal{Y}_{s})}\left[\sum_{y=1}^{K}\left(\frac{p(y|\bm{x})}{\sum_{y_{s}\in\mathcal{Y}_{s}}p(y_{s}|\bm{x})}\ell(\bm{g}(\bm{x}),y)\right)\right]\\
&=\left(\sum_{y\in\mathcal{Y}_{s}}\pi_{y}\right)\int\sum_{y=1}^{K}\left(\frac{p(y|\bm{x})}{p(y\in\mathcal{Y}_{s}|\bm{x})}\ell(\bm{g}(\bm{x}),y)\right)p(\bm{x}|y\in\mathcal{Y}_{s})d\bm{x}\\
&=\left(\sum_{y\in\mathcal{Y}_{s}}\pi_{y}\right)\int\sum_{y=1}^{K}\left(\frac{p(\bm{x}|y\in\mathcal{Y}_{s})}{p(y\in\mathcal{Y}_{s}|\bm{x})}\ell(\bm{g}(\bm{x}),y)\right)p(y|\bm{x})d\bm{x}\\
&=\int\sum_{y=1}^{K}\left(\frac{p(\bm{x},y\in\mathcal{Y}_{s})}{p(y\in\mathcal{Y}_{s}|\bm{x})}\ell(\bm{g}(\bm{x}),y)\right)p(y|\bm{x})d\bm{x}\\
&=\int\sum_{y=1}^{K}\left(p(\bm{x})\ell(\bm{g}(\bm{x}),y)\right)p(y|\bm{x})d\bm{x}\\
&=\int\sum_{y=1}^{K}\ell(\bm{g}(\bm{x}),y)p(\bm{x})p(y|\bm{x})d\bm{x}\\
&=R(\bm{g}).
\end{align*}
\end{proof}
Then we begin to give the formulation of Noise-Robust Sub-Conf learning:
\begin{definition}The risk estimator of Noise-Robust Sub-Conf learning is defined as follow:
\begin{align}
\label{ex}
\hat{\tilde{R}}_{sub}(\bm{g})=\frac{1}{n}\sum_{i=1}^{n}\phi_{i}\sum_{y=1}^{K}\tilde{r}^{y}_{i}\ell(\bm{g}(\bm{x}_{i}),y).
\end{align}
where $\phi(\bm{x})=\frac{p(\bm{x})}{p(\bm{x}|y\in\mathcal{Y}_{\mathrm{s}})}$ and $\phi_{i}=\frac{p(\bm{x}_{i})}{p(\bm{x}_{i}|y\in\mathcal{Y}_{\mathrm{s}})}$.
\end{definition}
Notice that if we substitute $p(\bm{x}|y\in\mathcal{Y}_{\mathrm{s}})$ with $p(\bm{x}|y_{\mathrm{s}})$, (\ref{ex}) is converted to (\ref{ERM-SC}). We can also get the infinite and finite-sample consistency of learning with (\ref{ex}) by conducting such substitution.
\section{Additional Information of Benchmark Experiments}
\subsection{Detailed Information of Benchmark Datasets}
In Section \ref{se}, we used 2 widely-used large-scale benchmark datasets. Here, we report the sources of these datasets and the way we split them.
\begin{itemize}
    \item Fashion-MNIST \cite{F}. It is a 10-class dataset of fashion items. Each instance is a 28*28 grayscale image. Source: \url{https://github.com/zalandoresearch/fashion-mnist}.
    
    \item CIFAR-10 \cite{C}. It is a 10-class dataset for 10 different objects and each instance is a 32*32*3 colored image in RGB format. Source: \url{https://www.cs.toronto.edu/~kriz/cifar.html}.
    
\end{itemize}
In the experiments, we first split 30\% of the dataset for generating confidences. Then we further split 10\% of the dataset for density ratio estimation. Finally we use 10\% of the dataset as the validation set. The rest of the dataset are used as the training set.

\subsection{Detailed Information of the Models and Optimization Algorithm}
For generating confidence scores, the model used for Fashion-MNIST is a 3-layer MLP (d-100-100-100-10) with ReLU and Resnet-18 \cite{ResNet} is used for CIFAR-10.  For density ratio estimation, the output dimension is changed to 1 and other settings are the same. For classification, a 3-layer MLP (d-500-500-500-10) with ReLU is used for Fashion-MNIST, and DenseNet-161 is used for CIFAR-10.

Adam with default momentum was used for optimization in this paper. For generating confidence scores, the epoch number, batch size, learning rate, and weight decay are set to 20, 100, 1e-4, and 1e-4 for Fashion-MNIST, respectively. For density ratio estimation, the epoch number is reduced to 10 and other hyperparameters are the same. When training the classifiers, the epoch number is increased to 100. For CIFAR-10, the learning rate and weight decay are changed to 1e-3 and other hyperparameters are the same as those of Fashion-MNIST. 

\section{Experimental Results on More Benchmark Datasets}
In this section, we show the performance of our methods and baseline methods on more benchmark datasets. We use two widely-used benchmark datasets and report the sources of these datasets. the way we split them is the same as that in the previous section.
\begin{itemize}
\item MNIST \cite{Mnist}. It is a grayscale dataset of handwritten digits from 0 to 9, where the size of the images is 28*28. Source: \url{http://yann.lecun.com/exdb/mnist/}.
\item Kuzushiji-MNIST \cite{Kmnist}. It is a 10-class dataset of cursive Japanese characters ('Kuzushiji'). Source: \url{https://github.com/rois-codh/kmnist}.
\end{itemize}
For generating confidence scores, the model used is a 3-layer MLP (d-100-100-100-10) with ReLU. For density ratio estimation, the output dimension is changed to 1 and other settings are the same. For classification, a 3-layer MLP (d-500-500-500-10) with ReLU is used.

Adam with default momentum was used for optimization in this paper. For generating confidence scores, the epoch number, batch size, learning rate, and weight decay are set to 20, 1000, 1e-4, and 1e-4, respectively. For density ratio estimation, the epoch number is reduced to 10 and other hyperparameters are the same. When training the classifiers, the epoch number is increased to 100. The experimental results are shown in Table \ref{TB3} and \ref{TB4}.

It can be seen from the experimental results that our SC/Sub-Conf learning method outperform other baseline methods in most cases and the NoRSC-Conf learning method can remain effective under extreme noise, which supports our claims in Section \ref{se}.
\begin{table}[t]
\caption{Mean and standard deviation of the classification accuracy over 20 trials for the MNIST dataset. The proposed methods were compared with the baseline Weighted method and fully-supervised method, with different classes used for training. The best and equivalent methods are shown in bold based on the 5\% t-test, excluding fully-supervised method.}
    \label{TB3}
    \centering
\resizebox{0.95\textwidth}{!}{
    \begin{tabular}{c|c|cc|c|c}
    \toprule
    \multicolumn{2}{c|}{Used Classes}&SC/Sub-Conf& NoRSC-Conf&Weighted&Supervised\\
    \midrule
    \multirow{2}*{8}&Accurate&\textbf{89.52$\pm$0.60}&87.87$\pm$1.27&87.68$\pm$1.20&\multirow{10}*{\centering 94.82$\pm$0.53}\\
    
    \cmidrule{2-5} &Noisy&--$\pm$--&\textbf{68.33$\pm$3.62}&63.39$\pm$2.65\\
    \cmidrule{1-5}
    \multirow{2}*{7 \& 9}&Accurate& \textbf{88.46$\pm$0.92}&80.08$\pm$1.86&84.13$\pm$2.00\\
    
    \cmidrule{2-5} &Noisy&--$\pm$--&\textbf{73.48$\pm$1.43}&71.61$\pm$1.78\\
    \cmidrule{1-5}
    \multirow{2}*{0 \& 2 \& 3}&Accurate&\textbf{91.20$\pm$0.26}&88.43$\pm$0.53&90.66$\pm$0.18\\
    
    \cmidrule{2-5} &Noisy&--$\pm$--&\textbf{88.33$\pm$0.15}&\textbf{88.32$\pm$0.47}\\
    \cmidrule{1-5}
    \multirow{2}*{\shortstack{1 \& 5 \& 7 \& 9}}&Accurate&\textbf{91.28$\pm$0.21}&90.89$\pm$0.16&89.72$\pm$0.45\\
    
    \cmidrule{2-5} &Noisy&--$\pm$--&\textbf{86.50$\pm$0.72}&83.85$\pm$1.20\\
    \bottomrule
    \end{tabular}
}
\end{table}
\newpage
\begin{table}[!htbp]
\caption{Mean and standard deviation of the classification accuracy over 20 trials for the Kuzushiji-MNIST dataset. The proposed methods were compared with the baseline Weighted method and fully-supervised method, with different classes used for training. The best and equivalent methods are shown in bold based on the 5\% t-test, excluding fully-supervised method.}
    \label{TB4}
    \centering
\resizebox{0.95\textwidth}{!}{
    \begin{tabular}{c|c|cc|c|c}
    \toprule
    \multicolumn{2}{c|}{Used Classes}&SC/Sub-Conf& NoRSC-Conf&Weighted&Supervised\\
    \midrule
    \multirow{2}*{'Tsu`}&Accurate&\textbf{68.45$\pm$1.42}&67.33$\pm$1.08&64.55$\pm$1.28&\multirow{14}*{\centering 76.06$\pm$0.32}\\
    
    \cmidrule{2-5} &Noisy&--$\pm$--&\textbf{38.14$\pm$2.12}&34.99$\pm$2.27\\
    \cmidrule{1-5}
    \multirow{2}*{'Ha`}&Accurate& \textbf{68.19$\pm$0.56}&67.07$\pm$0.65&66.86$\pm$0.67\\
    
    \cmidrule{2-5} &Noisy&--$\pm$--&\textbf{51.67$\pm$1.26}&48.21$\pm$1.63\\
    \cmidrule{1-5}
    \multirow{2}*{'Ma`}&Accurate&\textbf{66.20$\pm$0.93}&63.67$\pm$1.17&63.56$\pm$0.96\\
    
    \cmidrule{2-5} &Noisy&--$\pm$--&\textbf{38.84$\pm$2.41}&35.80$\pm$2.35\\
    \cmidrule{1-5}
    \multirow{2}*{\shortstack{'Su` \& 'Na`}}&Accurate&\textbf{69.98$\pm$0.72}&67.85$\pm$0.40&68.99$\pm$0.59\\
    
    \cmidrule{2-5} &Noisy&--$\pm$--&\textbf{62.17$\pm$1.26}&61.95$\pm$1.11\\
    \cmidrule{1-5}
    \multirow{2}*{\shortstack{'Ki` \& 'Ya` \& 'Re`}}&Accurate&\textbf{70.23$\pm$0.59}&68.64$\pm$0.62&69.77$\pm$1.58\\
    
    \cmidrule{2-5} &Noisy&--$\pm$--&\textbf{64.60$\pm$1.09}&\textbf{64.47$\pm$0.88}\\
    \cmidrule{1-5}
    \multirow{2}*{\shortstack{‘Ki' \& 'Su` \& 'Ha` \& 'Re`}}&Accurate&\textbf{70.48$\pm$0.47}&\textbf{70.08$\pm$0.51}&68.56$\pm$0.47\\
    
    \cmidrule{2-5} &Noisy&--$\pm$--&\textbf{66.86$\pm$0.74}&65.42$\pm$0.81\\
    \bottomrule
    \end{tabular}
}
\end{table}

\end{document}